\documentclass{article}
\usepackage[english]{babel}
\usepackage[utf8x]{inputenc}
\usepackage[T1]{fontenc}
\usepackage{natbib}
\usepackage{subcaption}
\usepackage{booktabs}
\usepackage{multirow}
\usepackage{url}
\usepackage{soul}
\usepackage{tikz}
\usepackage[dvipsnames]{xcolor}
\usetikzlibrary{calc,quotes,decorations.pathreplacing,decorations.text,decorations.pathmorphing}
\usepackage{hyperref}
\usepackage{paralist}
\usepackage[stable]{footmisc}
\usepackage{adjustbox}

\usepackage{bm}
\usepackage{bbm}
\usepackage{amsmath}
\usepackage[all,warning]{onlyamsmath}
\usepackage{amssymb}
\usepackage{amsthm}
\usepackage{amsfonts}
\usepackage{thmtools}	
\usepackage[euro]{isonums}

\usepackage[mathic=true]{mathtools}
\usepackage{fixmath}
\usepackage{siunitx}
\usepackage{dirtytalk}

\usepackage{mleftright}
\usepackage{xfrac}
\usepackage{algorithm}
\usepackage[]{algpseudocodex}

\usepackage{thm-restate}

\let\originalleft\mleft
\let\originalright\mright
\renewcommand{\mleft}{\mathopen{}\mathclose\bgroup\originalleft}
\renewcommand{\mright}{\aftergroup\egroup\originalright}

\usepackage{enumitem}
\setlist[enumerate]{itemsep=0.2ex, topsep=0.5\topsep}
\setlist[description]{itemsep=0.2ex, topsep=0.5\topsep}
\setlist[itemize]{itemsep=0.2ex, topsep=0.5\topsep}

\makeatletter
\def\thmt@refnamewithcomma #1#2#3,#4,#5\@nil{%
\@xa\def\csname\thmt@envname #1utorefname\endcsname{#3}%
\ifcsname #2refname\endcsname
\csname #2refname\expandafter\endcsname\expandafter{\thmt@envname}{#3}{#4}%
\fi
}
\makeatother

\usepackage[capitalise,noabbrev]{cleveref}   

\newtheorem{theorem}{Theorem}

\newtheorem{proposition}[theorem]{Proposition}

\newtheorem{lemma}[theorem]{Lemma}
\newtheorem{corollary}[theorem]{Corollary}

\theoremstyle{definition}

\theoremstyle{remark}

\usepackage[protrusion=true,expansion=false, activate={true,nocompatibility},final,kerning=true,spacing=true]{microtype}
\usepackage{ellipsis}

\usepackage{anyfontsize}

\usepackage{yfonts}

\newcommand{\cF}{\mathcal{F}}
\newcommand{\cG}{\mathcal{G}}

\newcommand{\cN}{\mathcal{N}}
\newcommand{\cO}{\mathcal{O}}

\newcommand{\cS}{\mathcal{S}}

\newcommand{\cV}{\mathcal{V}}
\newcommand{\cX}{\mathcal{X}}
\newcommand{\cY}{\mathcal{Y}}

\newcommand{\Nb}{\mathbb{N}}

\newcommand{\Rb}{\mathbb{R}}

\newcommand{\FNN}{\mathsf{FNN}}

\newcommand{\hb}{\mathbold{h}}

\newcommand{\UPD}{\mathsf{UPD}}
\newcommand{\AGG}{\mathsf{AGG}}
\newcommand{\RO}{\mathsf{READOUT}}

\newcommand{\relu}{\mathsf{reLU}}

\newcommand{\new}[1]{\emph{#1}}

\newcommand{\dist}{\operatorname{dist}}
\renewcommand{\vec}[1]{\mathbold{#1}}
\newcommand{\oms}{\{\!\!\{}
\newcommand{\cms}{\}\!\!\}}
\newcommand{\tup}[1]{{(#1)}}
\DeclarePairedDelimiterX{\norm}[1]{\lVert}{\rVert}{#1}

\usepackage[accepted]{icml2026}

\icmltitlerunning{Learning to Approximate Uniform Facility Location via Graph Neural Networks}

\begin{document}

\twocolumn[
  \icmltitle{Learning to Approximate Uniform Facility Location via Graph Neural Networks}

  \icmlsetsymbol{equal}{*}

  \begin{icmlauthorlist}
    \icmlauthor{Chendi Qian}{yyy}
    \icmlauthor{Christopher Morris}{yyy}
    \icmlauthor{Stefanie Jegelka}{xxx,comp}
    \icmlauthor{Christian Sohler}{sch}
  \end{icmlauthorlist}

  \icmlaffiliation{yyy}{RWTH Aachen University, Germany}
  \icmlaffiliation{xxx}{Technical University of Munich, Germany}
  \icmlaffiliation{comp}{Massachusetts Institute of Technology, USA}
  \icmlaffiliation{sch}{University of Cologne, Germany}

  \icmlcorrespondingauthor{Chendi Qian}{chendi.qian@log.rwth-aachen.de}

  \icmlkeywords{Machine Learning, ICML}

  \vskip 0.3in
]
\printAffiliationsAndNotice{}

\begin{abstract}

Neural networks, particularly message-passing neural networks (MPNNs), are increasingly used as heuristics for hard combinatorial optimization problems. Yet many learning-based methods rely on supervision, reinforcement learning, or gradient estimators, causing high computational cost, unstable training, or limited guarantees. Classical approximation algorithms provide worst-case guarantees but are non-differentiable and cannot adapt to structure in natural input distributions. We study this tradeoff through  Uniform Facility Location (UniFL), a problem with applications in clustering, summarization, logistics, and supply chains. We propose a fully differentiable MPNN that incorporates approximation-algorithmic principles without solver supervision or discrete relaxations. The model has provable approximation guarantees and empirically improves on standard approximation algorithms, narrowing the gap to integer linear programming.
\end{abstract}

\section{Introduction}\label{sec:introduction}
Recent years have seen remarkable progress in applying neural networks to a variety of hard \new{combinatorial optimization} (CO) problems. Message-passing graph neural networks (MPNNs) \citep{Gil+2017,Sca+2009} have been particularly popular due to their permutation invariance, scalability, and ability to exploit graph sparsity, making them natural candidates for leveraging structural information across CO instances. MPNNs have been used both as end-to-end heuristic solvers, e.g.,~\citet{bello2016neural,Kar+2020,Kha+2017}, and as learned components inside classical exact solvers, such as branch-and-cut pipelines for integer-linear optimization or SAT solving. In the latter setting, an MPNN replaces node-, variable-, or cut-selection heuristics~\citep{Gas+2019,Kha+2022,Sca+2024,Toe+2025}; see~\citet{Cap+2021} for a survey.

The motivation for combining learning and algorithms is to discover better heuristics and adapt to instance distributions, yielding faster, higher-quality solutions in practice by exploiting recurring structure in the data. However, two challenges repeatedly arise: (i) how to train these models effectively and efficiently, and (ii) how to ensure reliable solution quality.

Regarding training, two core difficulties are common. First, supervision with optimal solutions is expensive because computing such solutions is itself hard. Second, CO objectives are typically discrete and therefore non-differentiable. To handle non-differentiability, several works adopt surrogate gradient estimators~\citep{Pet+2024}, including straight-through estimators~\citep{bengio2013estimating}, continuous relaxations based on the Gumbel--softmax trick or concrete distribution~\citep{JangEtAl2017,MaddisonEtAl2017}, and more advanced gradient-estimation techniques such as I-MLE~\citep{NiepertMinerviniFranceschi2021} or SIMPLE~\citep{Kar+2023}. In practice, however, these methods can be brittle or cumbersome to tune. 
Another line of work \cite{Kar+2020,karalias22,Kar+2025} trains directly on a continuous extension of the discrete objective, circumventing both supervision and differentiability barriers.

Yet, these approaches, and end-to-end learned solvers more broadly, typically do not provide guarantees on the quality of the returned solutions. This stands in sharp contrast to \new{approximation algorithms}, which provide polynomial-time algorithms with rigorous worst-case guarantees on solution quality~\citep{Vazirani2001}. Such guarantees imply strong out-of-distribution robustness, since they hold for all inputs. The downside is that classical approximation algorithms are usually distribution-agnostic, i.e., they are designed for worst-case instances and therefore do not directly exploit structural regularities present in realistic data distributions, where substantially better performance may be possible.

Recently, \new{algorithms with predictions}~\citep{mitzenmacher2020algorithms} have aimed to incorporate learning-based predictions into discrete algorithms. However, these methods largely treat predictions as black boxes and are typically non-differentiable, making them difficult to integrate into fully differentiable neural pipelines. Another line of work integrates GNNs into classical exact solvers, e.g.,~\cite{Gas+2019}, thereby providing certificates for the final solutions. However, training such systems can be extremely resource-intensive, since it often requires repeatedly running a full solver---hundreds or thousands of times---to generate supervision signals or to evaluate learned policies. This dependence on repeated calls to expensive optimization routines limits scalability to large graphs and large training sets.

This gap between robust but conservative approximation algorithms and expressive but potentially unstable (or costly) learned solvers motivates us to combine the strengths of both paradigms. Ideally, one seeks architectures that are (i) fully differentiable and easy to train, (ii) unsupervised or weakly supervised to avoid costly solver calls, (iii) equipped with provable approximation guarantees, and (iv) able to exploit distribution-specific structure to improve empirical performance beyond the training set. Achieving these goals simultaneously remains a major open challenge, with very little existing work \cite{Yau+2024}.

\textbf{Contributions} In this work, we take a step toward this vision for the exemplary, fundamental case of the \new{uniform facility location problem} (UniFL). The UniFL asks to open a subset of facilities in a given finite metric space, each with uniform opening cost, to serve a set of clients, minimizing the sum of opening and connection costs. It has wide-ranging applications in logistics, supply-chain design, clustering, and network design~\citep{Lammersen2010}. In machine learning, facility location-like objectives have also been used for data summarization \citep{wei14icassp,bairi15summarization}. Even in this simplified case, with uniform opening costs, facility location is hard to approximate with a polynomial-time approximation algorithm~\citep{Badoiu2005}. Hence, we propose an MPNN-based framework that integrates approximation-algorithmic insights with differentiable learning, exploiting distributional information, while still providing theoretical performance guarantees; see~\cref{fig:overview} for a schematic overview of our framework. Concretely,
\begin{enumerate}[leftmargin=*]
    \item we introduce a novel fully differentiable, unsupervised MPNN architecture tailored to UniFL---designed to mirror key structural components of classical approximation algorithms. In addition, a slight change in the cost function enables end-to-end $k$-means clustering. 
    \item We show how to initialize the model with parameters that yield \emph{constant-factor approximation guarantees} and can be improved during training---bridging the gap between approximation algorithms and neural network-based heuristics.
    \item We show that we can train the architecture on a finite training set and that it provably generalizes to UniFL instances of the same size.
    \item Empirically, we show that architecture outperforms classical, non-learned approximation algorithms and is competitive to state-of-the-art branch-and-cut solvers while generalizing reliably to problem instances significantly larger than those seen during training.
\end{enumerate}
\emph{In summary, our approach embeds approximation-algorithmic structure into a fully differentiable, unsupervised MPNN, yielding a provably approximate method that generalizes well and outperforms classical baselines.}
\subsection{Related work} 

In the following, we discuss related work.

\textbf{MPNNs} MPNNs~\citep{Gil+2017,Sca+2009} emerged as the most prominent graph machine learning architecture. Notable instances of this architecture include, e.g.,~\citet{Duv+2015,Ham+2017,Kip+2017}, and~\citet{Vel+2018}, which can be subsumed under the message-passing framework introduced in~\citet{Gil+2017}. In parallel, approaches based on spectral information were introduced in, e.g.,~\citet{Bru+2014,Defferrard2016,Gam+2019,Kip+2017,Lev+2019}, and~\citet{Mon+2017}---all of which descend from early work in~\citet{bas+1997,Gol+1996,Kir+1995,Mer+2005,mic+2005,mic+2009,Sca+2009}, and~\citet{Spe+1997}.

\textbf{Facility location problem} The metric uncapacitated facility location problem is a classical topic in approximation algorithms~\citep{STA97,Shmoys2000}. A long line of work developed constant-factor approximation algorithms (see \citep{STA97,CG99,MYZ06,arya2004local,gupta2008simpler,mettu2003online,thorup2005,jain2001approximation} for a few representative results) using linear programming-based rounding, primal-dual methods, and local search; see~\citet{Shmoys2000} for an overview.  The problem is \textsf{APX}-hard, and no polynomial-time algorithm can achieve a factor better than $1{.}463$ unless $\mathsf{P}=\mathsf{NP}$~\cite{GuhaKhuller1999}. In constant-dimensional Euclidean space, there is a polynomial-time approximation scheme that was later extended to metric spaces with bounded doubling dimension. Also, distributed approximation algorithms are known \cite{MW05}, including an $\cO(1)$-approximation algorithm in the CONGEST model on a clique \cite{BHP12}. Even the UniFL, where all opening costs are identical, remains \textsf{NP}-hard, yet its simplified structure admits efficient local and distributed algorithms. In particular, the radius-based characterization introduced by \citet{mettu2003online}, refined in~\citet{Badoiu2005}, and the distributed algorithm $\cO(1)$-UFL ~\citep{GehweilerLammersenSohler2014} yield constant-factor approximation guarantees using only local neighborhood information. Our work builds on these ideas by embedding such local decision rules into a differentiable MPNN architecture, enabling provable approximation guarantees while allowing data-driven refinement of classical algorithmic techniques.

See~\cref{app:extended_related_work} for an extended discussion on related work.

\begin{figure}
\begin{center}
\resizebox{\linewidth}{!}{%
  \begin{tikzpicture}[transform shape, scale=1.5]
\clip (-1,1.2) rectangle (7.8,-2.5);

    \definecolor{llblue}{HTML}{7EAFCC}
    \definecolor{nodec}{HTML}{4DA84D}
    \definecolor{edgec}{HTML}{403E30}
    \definecolor{llred}{HTML}{FF7A87}
    \definecolor{llviolet}{HTML}{9690CC}
    \definecolor{llrose}{HTML}{CC8CBA}
    \definecolor{bblue}{HTML}{0A497C}
    \definecolor{lviolet}{HTML}{756BB1}

    \newcommand{\gnode}[5]{%
        \draw[#3, fill=#3!20] (#1) circle (#2pt);
        \node[anchor=#4,black] at (#1) {\tiny #5};
    }

\begin{scope}[shift={(-0.2,0)},rotate=-20]

\begin{scope}[shift={(-0.06,0)},scale=1.1]
\draw[draw=none,fill=llblue!5,rounded corners=25pt] (-0.55,1.1) -- (1.6,1.1) -- (1.6,-1.1) -- (-0.55,-1.1) --  cycle;
\end{scope}

\begin{scope}[shift={(-0.03,0)},scale=1.05]
\draw[draw=none,fill=llblue!10,rounded corners=25pt] (-0.55,1.1) -- (1.6,1.1) -- (1.6,-1.1) -- (-0.55,-1.1) --  cycle;
\end{scope}

\draw[draw=none,fill=llblue!20,rounded corners=25pt] (-0.55,1.1) -- (1.6,1.1) -- (1.6,-1.1) -- (-0.55,-1.1) --  cycle;

\coordinate (n1) at (0,0);

\coordinate (n2) at (0,0.4);
\coordinate (n3) at (-0.22,0.7);
\coordinate (n4) at (0.22,0.7);

\coordinate (n5) at (0,-0.4);
\coordinate (n6) at (-0.22,-0.7);
\coordinate (n7) at (0.22,-0.7);

\coordinate (n8) at (0.53,0);
\coordinate (n9) at (0.8,0);
\coordinate (n10) at (1.25,0.25);
\coordinate (n11) at (1.25,-0.25);

\draw[draw=llred,fill=none,dashed,dash pattern = on 2pt off 0.7pt,line width=0.5pt] (n9) circle[radius=4pt];

\draw[draw=llred,fill=none,dashed,dash pattern = on 2pt off 0.7pt,line width=0.5pt] (n9) circle[radius=11pt];
\draw[draw=llred,fill=none,dashed,dash pattern = on 2pt off 0.7pt,line width=0.5pt] (n9) circle[radius=19pt];

\node[llred!80!black,rotate=20] at (0.8,-0.24) {\scalebox{0.45}{${a_1}$}};
\node[llred!80!black,rotate=20] at (0.8,-0.5) {\scalebox{0.45}{${a_2}$}};
\node[llred!80!black,rotate=20] at (0.8,-0.78) {\scalebox{0.45}{${a_3}$}};

\node[lviolet!80!black,rotate=20] at (0.96,0.22) {\scalebox{0.45}{${t}$}};

\scalebox{0.45}{
\draw[lviolet,-stealth,line width=1pt,decorate,
  decoration={snake, amplitude=0.7pt, segment length=6pt,post length=2pt}] (1,-0.1) -- (1.38,-0.1);

\draw[lviolet,-stealth,line width=1pt,decorate,
  decoration={snake, amplitude=0.7pt, segment length=6pt,post length=3pt}] (2.5,0.5) -- (1.7,0);

\draw[lviolet,-stealth,line width=1pt,decorate,
  decoration={snake, amplitude=0.7pt, segment length=6pt,post length=3pt}] (2.5,-0.55) -- (1.68,-0.2);
}

\draw[edgec] (n1) -- (n2);
\draw[edgec] (n2) -- (n3);
\draw[edgec] (n2) -- (n4);
\draw[edgec] (n3) -- (n4);

\draw[edgec] (n1) -- (n5);
\draw[edgec] (n5) -- (n6);
\draw[edgec] (n5) -- (n7);
\draw[edgec] (n6) -- (n7);

\draw[edgec] (n1) -- (n8);

\draw[edgec] (n10) -- (n11);

\gnode{n1}{2}{llblue}{south}{};
\gnode{n2}{2}{llblue}{north}{};
\gnode{n3}{2}{llblue}{south}{};
\gnode{n4}{2}{llblue}{north}{};
\gnode{n5}{2}{llblue}{north}{};
\gnode{n6}{2}{llblue}{north}{};
\gnode{n7}{2}{llblue}{north}{};
\gnode{n8}{2}{llblue}{north}{};
\gnode{n9}{2}{llblue}{north}{};
\gnode{n10}{2}{llblue}{north}{};
\gnode{n11}{2}{llblue}{north}{};

\end{scope}

\begin{scope}[shift={(3.5,-0.4)}]
\node[black] at (0,0) {\scalebox{0.6}{$\hat{r}_x \coloneqq \sum\limits_{i \in [k]} \FNN_{2,3}  \mleft( a_i, t_x^{(i-1)}, t_x^{(i)} \mright)$}};
\end{scope}

\begin{scope}[shift={(6.5,-0.36)}]
\node[black] at (0,0) {\scalebox{0.6}{$p_x \coloneqq \FNN_{2,3} \mleft( n, \hat{r}_x  \mright)$}};
\end{scope}

\draw[black,-stealth,rounded corners=10pt] (6.5,-1.3) to[rounded corners=20pt] (6.5,-2.2) -- (0,-2.2) to[rounded corners=7pt] (0,-1.68);

\node[black,anchor=center] at (0,-1.3) {\scalebox{0.55}{Thresholding}};
\node[black,anchor=center] at (0,-1.5) {\scalebox{0.55}{and message passing}};

\draw[draw=none,fill=white] (2.67,-2.1) rectangle (4.64,-2.3);
\node[black,anchor=west] at (2.6,-2.2) {\scalebox{0.55}{$\nabla$unsupervised loss}};

\node[black] at (3.5,-1) {\scalebox{0.55}{Pooling for radius estimation}};

\node[black] at (6.4,-0.9) {\scalebox{0.55}{Computation of}};
\node[black] at (6.4,-1.1) {\scalebox{0.55}{open probabilities}};

\draw[black,-stealth] (0.7,-0.32) to[bend right=25] (1.85,-0.4);

\draw[black,-stealth] (4.5,-0.1) to[bend left=25] (6.3,-0.1);

\end{tikzpicture}%
}
\end{center}
\vspace{-10pt}
\caption{Overview of how MPNN probably recovers a near-optimal solution for the UniFL problem. Local message-passing around a given point $x$ is used to compute an estimate of the radius $r_x$, followed by the computation of the opening probabilities, and MPNN's parameters are computed based on an unsupervised loss via the expected cost of the solution.
\label{fig:overview}}
\end{figure}

\section{Background}\label{sec:background}

In the following sections, we introduce the notation and provide the necessary background.

\textbf{Basic notations} Let $\Nb \coloneq \{ 1, 2, \ldots \}$. The set $\Rb^+$ denotes the set of non-negative real numbers. For $n \in \Nb$, let $[n] \coloneq \{ 1, \dotsc, n \} \subset \Nb$. We use $\oms \dotsc \cms$ to denote multisets, i.e., the generalization of sets allowing for multiple, finitely many instances of each of its elements. 

\textbf{Graphs} An \new{(undirected) graph} $G$ is a pair $(V(G),E(G))$ with \emph{finite} sets of \new{vertices} $V(G)$ and \new{edges} $E(G) \subseteq \{ \{u,v\} \subseteq V(G) \mid u \neq v \}$.  The \new{order} of a graph $G$ is its number $|V(G)|$ of vertices. If not stated otherwise, we set $n \coloneq |V(G)|$ and call $G$ an \new{$n$-order graph}. Let $\cG$ be a set of graphs, then $\cV(\cG)$ denotes the set of pairs $(G,v)$ where $G \in \cG$ and $v \in V(G)$. The \new{neighborhood} of a vertex $v \in V(G)$ is denoted by $N_G(v) \coloneq \{ u \in V(G) \mid \{v, u\} \in E(G) \}$, where we usually omit the subscript for ease of notation, and the \new{degree} of a vertex $v$ is $|N_G(v)|$. An \new{attributed graph} is a pair $(G,a_G)$ with a graph $G$ and an (vertex-)attribute function $a_G \colon V(G) \to \Rb^{d}$, for $d > 0$. The \new{attribute} or \new{feature} of $v \in V(G)$ is $a_G(v)$. An \new{edge-weighted (attributed)  graph} is a pair $(G,w_G)$ with a (attributed) graph $G$ and an (edge-)weight function $w_G \colon E(G) \to \Rb^+$. For an edge $e \in E(G)$, $w_G(e)$ is the \new{(edge) weight} of $e$. 

Here, we define metric spaces, which play an essential role in the following.

\textbf{Metric spaces} Let $\cX$ be a set equipped with a \new{pseudo-metric} $d \colon \cX\times \cX\to\Rb^+$, i.e., $d$ is a function satisfying $d(x,x)=0$ and $d(x,y)=d(y,x)$ for $x,y\in\cX$, and $d(x,y)\leq d(x,z)+d(z,y)$, for $x,y,z \in \cX$. The latter property is called the triangle inequality. The pair $(\cX,d)$ is called a \new{pseudo-metric space}. For $(\cX,d)$ to be a \new{metric space}, $d$ additionally needs to satisfy $d(x,y)=0\Rightarrow x=y$, for $x,y\in\cX$. For $x\in \cX$, and $Y\subseteq \cX$ let $B_Y(x,r) \coloneqq \{y\in Y \colon d(x,y) \le r\}$. If no ambiguity can arise, we also write $B(x,r)$ instead of $B_{\cX}(x,r)$.

See~\cref{app:extended_background} for additional details on continuity assumptions, MPNNs, and feed-forward neural networks.

\subsection{Uniform facility location}

Here, we formally introduce the \new{uniform facility location problem} (UniFL). To that, let $(\cX,d)$ be a \emph{finite} metric space, with $|\cX|=n$. In the UniFL, the objective is to find a set $F \subseteq \cX$ such that 
\begin{equation}\label{eq:objective}
\sum_{x\in \cX} d(x,F) +  |F|
\end{equation}
is minimized, where $d(x,F) \coloneqq \min_{y\in F} d(x,y)$. 
That is, our objective is to find a set $F$ of \new{facilities} to open, where facilities can be opened at any location $x\in \cX$, and, w.l.o.g., each facility costs a uniform $1$. The points $x\in \cX$ at which we did not open a facility are the \new{clients}, and they are served by the \new{closest facility} $f_x$ at cost $d(x,f_x)$. For a given metric space $S$, we denote by $\textsc{Opt}_S$ the smallest possible value of~\cref{eq:objective}.
We remark that we can replace the unit opening cost by an arbitrary value $\lambda >0$ and minimize 
$\sum_{x\in \cX} d(x,F) +  \lambda \cdot |F|$ as we can simply rescale all distances by a factor $1/\lambda$ and obtain the version with opening costs $1$. With this formulation, we may interpret the problem as a clustering problem with a regularization term to control the number of clusters.

In the following, we outline some known results that are useful for the UniFL. To that, let $(\cX,d)$ be a finite metric space and $x \in \cX$. Following~\citet{mettu2003online}, the \emph{radius} of $x$ is the value $r_x >0$ such that 
\begin{equation}\label{eq:radius}
\sum_{y\in \cX \cap B(x,r_x)} r_x - d(y,x) = 1.
\end{equation}
See~\cref{fig:radius} in the appendix for an illustration of the radius. 
Interestingly, knowing the radii of all points in a given metric space $S$ is useful for determining the minimum value $\textsc{Opt}_S$, as stated in the following result. 
\begin{lemma}[\cite{Badoiu2005}]\label{thm:radii} Let $S \coloneqq (\cX,d)$ be a finite metric space, then 
$\sum_{x\in \cX \cap B(x,r_x)} r_x \in \Theta(\textsc{Opt}_S).$
\end{lemma}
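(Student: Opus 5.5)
The statement is the classical Mettu–Plaxton/B\u{a}doiu et al.\ fact, read with the (evidently intended) sum $\sum_{x\in\cX} r_x$. We prove the two inequalities $\sum_x r_x \le c_1\,\textsc{Opt}_S$ and $\textsc{Opt}_S \le c_2 \sum_x r_x$ separately, for absolute constants $c_1,c_2$. First record two facts about radii. Since $f(r)=\sum_{y\in B(x,r)}(r-d(x,y))$ is continuous, strictly increasing from $0$, and unbounded, $r_x$ exists and is unique. Moreover $r_x\le 1$, because the term $y=x$ alone gives $f(1)\ge 1$. Second, radii are $1$-Lipschitz: $|r_x-r_y|\le d(x,y)$. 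To see this, suppose $r_y> r_x+d(x,y)$. Then $B(x,r_x)\subseteq B(y,r_x+d(x,y))$ and every term satisfies $(r_x+d(x,y))-d(y,z)\ge r_x-d(x,z)$, so $f_y(r_x+d(x,y))\ge 1$. This contradicts the monotonicity of $f_y$.

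\emph{Lower bound, $\textsc{Opt}_S \ge \Omega(\sum_x r_x)$.} Fix an optimal $F$ and, for each $f\in F$, its cluster $C_f$. It suffices to show that $\sum_{x\in C_f} r_x \le c\,(1+\sum_{x\in C_f} d(x,f))$ for each cluster. Split $C_f$ into near points with $d(x,f)\le r_x/2$ and far points. For a far point, $r_x < 2d(x,f)$, so far points are charged to connection cost. For near points, Lipschitzness gives $r_x\le 2 r_f$ and $r_x\ge \tfrac{2}{3} r_f$, and every near point lies in $B(f,r_f)$. Let $m$ be the number of near points. If $m r_f\le 2$, the near points contribute at most $2 m r_f \le 4$, which is charged to the opening cost. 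Otherwise consider $\sum_{x\ \text{near}} (r_f - d(x,f))\le f_f(r_f)=1$. This gives $\sum_{x\ \text{near}} d(x,f)\ge m r_f - 1\ge m r_f/2$, so the near points contribute $O(\sum d(x,f))$. Summing over clusters yields the bound.

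\emph{Upper bound, $\textsc{Opt}_S\le O(\sum_x r_x)$.} We construct a solution greedily in the Mettu–Plaxton style. Process points in increasing order of $r_x$, and open $x$ unless some already-open facility lies within distance $2r_x$. Each unopened $x$ is then within $2r_x$ of a facility, so connection cost is at most $2\sum_x r_x$. For the opening cost, the balls $B(f,r_f)$ for $f\in F$ are pairwise disjoint: for $f$ opened before $f'$ we have $d(f,f')>2r_{f'}\ge r_f+r_{f'}$. Next show that each facility's ball carries radius mass at least a constant. Every $y\in B(f,r_f/2)$ has $r_y\ge r_f/2$ by Lipschitzness. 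Also $1=f_f(r_f)\le \sum_{y\in B(f,r_f)}(r_f-d(f,y))$, which splits as an inner half and an outer shell. Points of the shell contribute at most $r_f/2$ each while having $r_y\ge 0$. I would handle this by using the cruder count $1\le |B(f,r_f)|\,r_f$ together with $r_y\ge r_f - d(f,y)$, which gives $\sum_{y\in B(f,r_f)} r_y\ge \sum_{y}(r_f-d(f,y))=1$ directly. Hence $|F|\le \sum_{f}\sum_{y\in B(f,r_f)} r_y\le \sum_x r_x$ by disjointness.

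The main obstacle is the near-point analysis in the lower bound, where the local density of points must be converted into connection cost. The Lipschitz property of radii is the key tool there, so it is established first. The upper-bound packing step becomes clean once one observes $r_y\ge r_f-d(f,y)$.
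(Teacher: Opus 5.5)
The paper does not prove this lemma; it imports it from \citet{Badoiu2005}, so there is no in-paper argument to compare against. Your proof is correct and self-contained, and your reading of the misprinted index as $\sum_{x\in\cX} r_x$ is the intended one.

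\textbf{How your argument is organised.} It rests on one tool, the $1$-Lipschitz property $|r_x-r_y|\le d(x,y)$, and derives both directions from it.
\begin{itemize}
\item The near/far split inside each optimal cluster gives $\sum_x r_x\le 4\,\textsc{Opt}_S$. Far points satisfy $r_x<2d(x,f)$. Near points satisfy $r_x\le 2r_f$ and lie in $B(f,r_f)$, so $\sum_{\text{near}}(r_f-d(x,f))\le 1$.
\item The Mettu--Plaxton greedy gives $\textsc{Opt}_S\le 3\sum_x r_x$. Its balls $B(f,r_f)$ are pairwise disjoint, and each carries $\sum_{y\in B(f,r_f)} r_y\ge\sum_{y\in B(f,r_f)}(r_f-d(f,y))=1$.
\end{itemize}

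\textbf{Comparison.} This matches the usual shape of the argument in the cited line of work: a greedy in one direction and charging against an optimal clustering in the other. Your version has the advantage of explicit constants, $\tfrac13\,\textsc{Opt}_S\le\sum_x r_x\le 4\,\textsc{Opt}_S$.

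Your Lipschitz lemma is also a sharper, general form of radius comparisons the paper later proves ad hoc with the same ball-containment trick. These are $r_y\le 2r_x$ for $y\in B(x,r_x)$ in \cref{app:lemma:Recursive}, and $r_y\le 4r_x$ for $d(x,y)\le 2r_x$ in \cref{app:lemma:Recursive2}, where Lipschitzness even gives $3r_x$.

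\textbf{Cosmetic points.} The bound $r_x\ge\tfrac23 r_f$ for near points is never used. The detour through the count $1\le|B(f,r_f)|\,r_f$ is also unnecessary, since $r_y\ge r_f-d(f,y)$ alone closes the packing step.
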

Hence, knowing the radius leads to a constant-factor approximation of the smallest possible value of $\textsc{Opt}_S$.

\textbf{Graph representation of UniFL instances} We can straightforwardly encode a given UniFL instance $S \coloneqq (\cX,d)$ as an edge-weighted graph $G_S$ where  $V(G_S) \coloneq \cX$ and $E(G_S) \coloneqq \{(u,v) \colon d(u,v) \le 1\}$ contain edges for every pair of vertices $u$ and $v$ that are at a distance smaller than $1$. Note that we remove edges of weight greater than $1$ because, following \cref{eq:objective}, it is cheaper to open a facility than to serve it from a facility at a distance of more than $1$. The edge weights $w \colon E(G_S) \to \Rb$ of $G_S$ are the distances between the vertices, i.e., $w((u,v)) \coloneqq d(u,v)$, for $(u,v) \in E(G_S)$. Let $\cG_n$ denote the set of graphs encoding UniFL instances of finite metric spaces with $|\cX| = n$. If the edge-weights are from a compact subset of $\Rb$ we write $\cG^{\mathrm{C}}_n.$

\section{An expressive and generalizing MPNN architecture for approximating the UniFL}

Here, we devise a simple, scalable MPNN architecture that can represent an $\cO(\log(n))$-approximation algorithm for the UniFL with input size $n$ while being trainable in an unsupervised setting. Subsequently, we extend the algorithm to a constant-factor approximation algorithm for the UniFL. We start by devising a distributed approximation algorithm for UniFL and neuralize it, while retaining the ability to recover the approximation factor for the non-learned algorithm. Finally, we show that a finite training dataset is sufficient to learn parameters that simulate the algorithm on the UniFL instance of arbitrary input size.

\subsection{A simple distributed algorithm for the UniFL}\label{sec:simpleu}

In the following, we derive a distributed $\cO(\log(n)) $-approximation algorithm for the UniFL, which essentially leverages~\cref{thm:radii}. Subsequently, we extend to a constant-factor approximation algorithm.  That is, given a finite metric space $(\cX,d)$, we first compute the radius for each $x \in \cX$ and then open a facility with probability $\min(1,c \cdot \ln n \cdot r_x)$, for an appropriate constant $c>0$. Afterward, points $x \in \cX$ that do not have a facility opened at a distance strictly smaller than $1$ open up a facility themselves; see~\textsc{SimpleUniformFL} for the pseudo-code of the algorithm.  
\begin{tabbing}
\label{alg:simpleUniFL}
{\textsc{SimpleUniformFL}}($\cX,d$)\\
1.\hspace{0.5cm}\= \textbf{for each } $x\in \cX$ \textbf{ do in parallel} compute the radius $r_x$ \\
2. \> \textbf{for each } $x\in \cX$ 
\textbf{ do in parallel} \\
 \> \hspace{0.3cm} independently open a facility with probability  \\
 \> \hspace{0.3cm} \= $\min(1,c \cdot \ln n \cdot r_x)$  
 for an appropriate constant $c>0$ \\
3.\> Let $F_1$ be the set of open facilities\\
4. \> {\textbf{for each }} $x\in \cX$ \textbf {do in parallel}
\\
5. \>\>
\textbf{ if } there is an open facility $f$ in $F_1$ with $d(x,f)<1$ \\
\> \hspace{0.3cm}  \textbf{ then }  assign $x$ to $f$\\
6. \>\>\textbf{ else } open a facility at $x$ \\
7. \> Let $F_2$ be the set of facilities opened in line 6\\
8. \> \textbf{return} $F_1 \cup F_2$
\end{tabbing}

The following results establish the $\cO(\log(n))$-approximation result of the above algorithm \textsc{SimpleUniformFL}.

\begin{proposition}\label{thm:simple_logn}
Let $S =(\cX,d)$ be a UniFL instance, then the algorithm \textsc{SimpleUniformFL} with $c \ge 2$ computes a solution to the UniFL for $S$ with expected cost $\cO(\log(n)) \cdot \textsc{Opt}_S$, where $\textsc{Opt}_S$ denotes the cost of an optimal solution.
\end{proposition}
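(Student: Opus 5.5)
We bound the expected opening cost of $F_1$, and separately the connection cost plus the cost of $F_2$, each by $\cO(\log n)\cdot\sum_x r_x$. Then \cref{thm:radii} gives $\sum_x r_x=\Theta(\textsc{Opt}_S)$ and finishes the proof. We use two elementary facts about radii. First, $r_x\le 1$ always: the term $y=x$ alone contributes $r_x$ to \cref{eq:radius}, so $r_x\le 1$. Second, radii are $1$-Lipschitz, $|r_x-r_y|\le d(x,y)$. To see this, suppose $r_y>r_x+d(x,y)$. Then $B(x,r_x)\subseteq B(y,r_y)$, and for every $z\in B(x,r_x)$ we have $r_y-d(z,y)> r_x+d(x,y)-d(z,y)\ge r_x-d(z,x)$, which would make the defining sum for $y$ exceed $1$.

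\emph{Opening cost.} By linearity, $\mathbb{E}|F_1|\le \sum_x c\ln n\cdot r_x=\cO(\log n)\sum_x r_x$.

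\emph{Cost of the second phase.} Fix $x$. We claim its contribution, namely $d(x,F_1)$ if some facility lies within distance $<1$, and otherwise $1$ for opening $x$, has expectation $\cO(\log n)\, r_x$ (possibly plus an additive $1/n$ term summing to $\le 1\le\textsc{Opt}_S$).

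\textbf{Case $c\ln n\cdot r_x\ge 1$.} Then $x$ opens with probability $1$, so it costs $0$ here and was already charged in $F_1$.

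\textbf{Case $c\ln n\cdot r_x<1$.} Consider the ball $B=B(x,r_x)$. From \cref{eq:radius}, $1=\sum_{y\in B}(r_x-d(x,y))\le |B|\,r_x$, hence $|B|\ge 1/r_x$. By the Lipschitz property, every $y\in B$ has $r_y\ge r_x-d(x,y)$. So
\[
\sum_{y\in B}\min(1,c\ln n\, r_y)\ \ge\ c\ln n\sum_{y\in B}(r_x-d(x,y))=c\ln n,
\]
assuming the minimum is never $1$; if some $y\in B$ opens with probability $1$, we are done trivially. Consequently $\Pr[F_1\cap B=\emptyset]\le \prod_{y\in B}(1-p_y)\le e^{-c\ln n}=n^{-c}\le n^{-2}$ for $c\ge2$.

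On the complementary event, some facility lies within distance $r_x<1$ of $x$, so $x$ is assigned with cost at most $r_x$. Hence $x$ contributes at most $r_x+n^{-2}\cdot 1$ in expectation. Summed over $x$, the error terms total at most $1/n\le \textsc{Opt}_S$, since $\textsc{Opt}_S\ge1$ because at least one facility must be opened.

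\emph{Conclusion and main obstacle.} Adding the two bounds gives expected cost $\cO(\log n)\sum_x r_x+\cO(\textsc{Opt}_S)=\cO(\log n)\textsc{Opt}_S$ by \cref{thm:radii}. The main obstacle is the covering step. It must show that the open probabilities inside $B(x,r_x)$ sum to at least $c\ln n$, and this hinges on the Lipschitz bound $r_y\ge r_x-d(x,y)$ exactly matching the weights in \cref{eq:radius}. I would verify that monotonicity argument carefully. One point to check there is that $x$ is assigned to a facility within $r_x$ rather than its nearest one; since the algorithm says "assign to $f$", I would interpret it as the closest facility, which only helps.
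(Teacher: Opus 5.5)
Your proof is correct, and it takes a different route from the paper in the key covering step.

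\textbf{The paper's route.} The paper bounds the connection cost via \cref{app:lemma:nearbypoints}. Starting from $x$, it repeatedly moves to the minimum-radius point of the current ball until the radius stops halving. This yields a set $T$ of $k$ points within distance $2r_x$ of $x$, each with $r_y\ge 1/(2k)$. Every point of $T$ therefore opens with probability at least $\min(1,c\ln n/(2k))$, so the failure probability is at most $n^{-c/2}$ and $\mathbb{E}[Z_x]\le 2r_x+n^{-c/2}$. This is where the paper needs $c\ge 2$.

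\textbf{Your route.} You prove that radii are $1$-Lipschitz and plug $r_y\ge r_x-d(x,y)$ directly into the defining equation of $r_x$. This shows that the total opening mass in $B(x,r_x)$ is at least $c\ln n$, unless some point there opens with certainty.

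\textbf{Comparison.} Your argument is shorter and avoids the chain construction entirely. It also gives sharper constants: the facility lies within $r_x$ instead of $2r_x$, and the failure probability is $n^{-c}$ instead of $n^{-c/2}$. Already $c\ge 1$ would keep the additive error $n^{1-c}$ below $\textsc{Opt}_S$. What the paper gains is a stand-alone structural lemma, which it reuses in \cref{app:lemma:Recursive2}. Your Lipschitz argument would adapt there as well: if $d(x,F')>6r_x$, then every $y\in B(x,r_x)$ has $r_y\le 2r_x<5r_x<d(y,F')$. So the mass in the ball is again at least $c$, and the failure probability is at most $e^{-c}$.

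\textbf{One detail to state explicitly.} In your second case, line 5 of the algorithm needs the strict bound $r_x<1$. This follows from $c\ln n\cdot r_x<1$ together with $c\ln n\ge 2\ln 2>1$ for $n\ge 2$; the case $n=1$ is trivial.
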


We remark at this point that the distributed algorithm $\cO(1)$-UFL in~\citet{GehweilerLammersenSohler2014} achieves a constant approximation in expectation. However, our MPNN implementation computes the opening probabilities, and the algorithm's simplicity allows us to estimate the expected cost of the corresponding solution. This will be crucial for our training procedure. We are not aware of a way to do this for the algorithm from~\citep{GehweilerLammersenSohler2014}. In~\cref{sec:recursion}, we argue that we can modify the algorithm in such a way that it only assigns a subset of $\cX$ to open facilities and can be recursively applied to the remaining instances in such a way that the resulting (combined over the recursive calls) solution is a constant approximation in expectation. 

\subsection{A simple MPNN architecture for UniFL}\label{simple_mpnn}

In the following, we derive a simple MPNN architecture that is expressive enough to simulate \textsc{SimpleUniformFL}; see~\cref{fig:overview} for a schematic overview. For each point $x$ in the given metric space, following~\cref{def:MPNN_aggregation} in the appendix, the MPNN aggregates information from the local neighbors to compute a lower bound for the radius $r_x$, which is subsequently used to compute opening probabilities for each point $x$. Unlike \textsc{SimpleUniformFL}, every step of the architecture is parameterized and fully differentiable, allowing for unsupervised end-to-end training on a given instance distribution.

Formally, let $S \coloneqq (\cX,d)$ denote a finite metric space, $x \in \cX$ be a client, and,  w.l.o.g., we assume $r_x \leq 1$. To compute the radius of $x$, we discretize the range $(0,1]$ into \say{bins} and determine into which bin $r_x$ falls. To that, for $k > 0$, let $(a_0, a_1, \ldots, a_k)$ be a tuple of real numbers, a \new{discretization}, such that  $0 = a_0 < a_1 < a_2 < \cdots < a_k = 1.$ We can now estimate the radius via the following local aggregation, 
\begin{align*}
t_x^{(i)} &\coloneqq \min \Big\{1, \sum_{y \in N(x)} \relu(a_i - d(x,y)) \Big\} \\
&= 1- \relu \Big( 1-\sum_{y \in N(x)} \relu(a_i - d(x,y)) \Big),
\end{align*}
for $i \in [k]$, which we can further parametrize with a two-layer FNN using \textsf{ReLU} activation functions as follows,
\begin{equation}\label{eq:updatet}
t_x^{(i)} \equiv \FNN_{2,3} \Big( \sum_{y \in N(x)} \FNN_{1,3} \mleft(a_i, d(x,y) \mright) \Big),
\end{equation}
where the existence of the required weight assignments is easily verified. The idea is that $t_x^{(i)}$ is equal to $1$ if the radius is at least $a_i$. Hence, we compute an estimate of the radius
\begin{equation}
\begin{aligned}
\label{eq:hatrx}
\hat{r}_x &\coloneqq \sum_{i \in [k]} a_i \cdot \mleft( t_x^{(i-1)}-t_x^{(i)} \mright)\\
&\equiv \sum_{i \in [k]} \FNN_{2,3}  \mleft( a_i, t_x^{(i-1)}, t_x^{(i)} \mright),
\end{aligned}
\end{equation}
via parameterized local aggregation, and estimate the probability of making client $x$ a facility via
\begin{equation*}
p_x \coloneqq  \min \mleft\{1,c \cdot \log(n) \cdot \hat{r}_x \mright\} \equiv \FNN_{2,3} \mleft( n, \hat{r}_x  \mright).
\end{equation*}

\textbf{Training to solve UniFL} In the following, we devise an unsupervised loss that allows training to adapt the parameters of the above MPNN architectures to UniFL instance distributions. To that, let $F_1$ denote the random variable for the set of facilities opened by the algorithm in line $2$, and $F_2$ be the random variable for the set of facilities opened by the algorithm in line 7 in the algorithm \textsc{SimpleUniformFL}. We first observe that a facility is opened if it belongs to $F_1$ or $F_2$. Let $A_f$ denote the indicator random variable for the event that $f\in F_1$ and let $B_f$ denote the indicator random variable for the event $f\in F_2$, i.e., there is no facility in $F_1$ that has a distance smaller than $1$. The expected opening cost can then be written as
\begin{equation*}
\mathbb{E}\mleft[\sum_{f\in X} A_f + \sum_{f\in X} B_f\mright] = \sum_{f\in X} \mathbb{E} \mleft[A_f\mright]
+ \sum_{f\in X} \mathbb{E}\mleft[B_f\mright]
\end{equation*}
by linearity of expectation.
We further have $\mathbb{E}\mleft[A_f\mright]=\Pr\mleft[A_f=1\mright] = p_f$ and $\mathbb{E}\mleft[B_f\mright] = \Pr\mleft[B_f=1\mright]
=\prod_{x\in X; d(x,f) <1} (1-p_x).$
Thus, the expected opening cost is 
\begin{equation*}
\sum_{f\in X} p_f + \sum_{f\in X}
\prod_{x\in X; d(x,f) <1} (1-p_x).
\end{equation*}

Recall that we only connect clients to $F_1$. To obtain the expected connection cost at a point $x\in X$, we observe that if there is no $f\in F_1$ within distance $1$ of $x$, the facility opens and incurs no connection cost. If this is not the case, we connect to the nearest open facility from $F$. A facility $f$ is the nearest open facility to a point $x$, if $f$ is open and all facilities that are closer to $f$ are closed. The latter happens with probability  $p_f \cdot \prod_{z\in X; d(x,z) < d(x,f)} (1-p_z).$
Hence, overall, we get the following expected costs
\begin{equation}
\begin{aligned}
\label{eq:unsupervise_loss}
\mathbb{E} &\mleft[\text{cost}(X,F) \mright] = \sum_{f\in X} p_f + \sum_{f\in X} \prod_{x\in X; d(x,f) <1} \!\!\!(1-p_x)  \\
&+ 
\sum_{x\in X} \sum_{f\in X; d(x, f) < 1} 
\dist(x,f) \cdot p_f \\
&\cdot \prod_{z\in X; d(x,z) < d(x,f)} (1-p_z),
\end{aligned}
\end{equation}
where we assume that all pairwise distances are distinct. For a finite metric space $(\cX,d)$ with $n = |\cX|$ and the corresponding graph representation $G_S$ having at most degree $d$, the running time for evaluating the above loss is in $\cO(nd^2)$.

\textbf{Approximation guarantees} In the following, we show that there are parameters for the above architecture such that we can recover the $\cO(\log(n))$-factor approximation of the~\cref{thm:simple_logn} for the non-learned algorithm \textsc{SimpleUniformFL}. Precisely, we show the following result.

\begin{proposition}\label{thm:simulation}
For every $n \in \Nb$, there exists a $k \in \Nb$, a discretization
$0 = a_0 < a_1 < \dots < a_k = 1,$
and an MPNN architecture $\textsf{mpnn}_{\vec{\theta}} \colon \cV(\cG_n) \to [0,1]$ parameterized by $\vec{\theta}$, following~\cref{simple_mpnn}, and a parameter assignment $\vec{\theta}^*$ such that for every UniFL instance $S \coloneqq (\cX,d)$ with $|\cX| = n$, the MPNN $\textsf{mpnn}_{\vec{\theta}^*}$ outputs vertex-wise probabilities
\begin{equation*}
p_v \coloneqq  \textsf{mpnn}_{\vec{\theta}^*}(G_S, v)  \in [0,1], \quad \text{ for } v \in V(G_S),
\end{equation*}
such that
\begin{equation*}
\mathbb{E}\big[\mathrm{cost}(\cX,p)\big] \le \cO( \log(n)) \cdot \mathrm{Opt}_S,
\end{equation*}
where $\mathrm{cost}(X,p)$ denotes the facility-opening and connection cost obtained by applying the standard SimpleUniformFL post-processing (line 4--6 in \textsc{SimpleUniformFL}) using the probabilities $\{p_x\}_{x \in X}$. \emph{In particular, the $\textsf{mpnn}_{\vec{\theta}^*}$ realizes an $\cO(\log(n))$-approximation algorithm for the UniFL on all instances of order $n$.}
\end{proposition}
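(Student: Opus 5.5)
We reduce the statement to \cref{thm:simple_logn} by showing that, with a suitable discretization and weights, the MPNN computes opening probabilities $p_x$ with $p_x \ge \min(1,c\ln n\cdot r_x)$ for the constant $c\ge 2$ used there, and $p_x \le \min(1, c'\ln n \cdot r_x)$ for another constant $c'$. Then we rerun the analysis of \cref{thm:simple_logn} with these probabilities. The relevant properties of the analysis are as follows. A larger lower bound on $p_x$ only decreases the expected cost of the fallback facilities $F_2$ and the expected connection cost. The expected size of $F_1$ is at most $\sum_x c'\ln n\cdot r_x = \cO(\log n)\cdot\textsc{Opt}_S$ by \cref{thm:radii}. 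So it suffices to realize $\hat r_x$ with $r_x \le \hat r_x \le 2r_x$, or up to an additive error that is negligible relative to $r_x$.

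\textbf{Step 1: exactness of the thresholds.} Define $g_x(a)\coloneqq\sum_{y\in N(x)}\relu(a-d(x,y))$. We include $x$ itself, or add the self term $a$ explicitly, which the FNN can do. The function $g_x$ is continuous, nondecreasing, and piecewise linear in $a$. By \cref{eq:radius}, $g_x(r_x)=1$, so $t_x^{(i)}=\min\{1,g_x(a_i)\}$ equals $1$ exactly when $a_i\ge r_x$. Note that for $a\le 1$ the ball $B(x,a)$ only contains graph neighbours, since edges exist for distances at most $1$. We use the convention $r_x\le 1$: since $g_x(1)\ge 1$ because of the self term, we have $r_x\le 1$ automatically. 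The weights realizing $\relu(a_i-d)$, the sum, and $1-\relu(1-\cdot)$ are explicit, so \cref{eq:updatet} holds exactly.

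\textbf{Step 2: the radius estimate, which is the main obstacle.} The telescoping formula \cref{eq:hatrx} only behaves like a bin indicator if we instead use $\hat r_x=\sum_i (a_i-a_{i-1})(1-t_x^{(i-1)})$, which is also a two-layer ReLU expression in $(a_i,t^{(i-1)},t^{(i)})$. Fractional values $t_x^{(i)}<1$ for $a_i<r_x$ are a problem: they make $\hat r_x$ lie between $r_x$ and the next grid point, not below $r_x$, so they are harmless for the lower bound. For a multiplicative guarantee we choose a geometric grid $a_i=2^{i-k}$ together with a linear start $a_0=0$. We also need a lower bound on radii. For $n$ points, $g_x(a)\le n a$, so $r_x\ge 1/n$. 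Hence it suffices to take $k=\lceil\log_2 n\rceil+1$, and then every $r_x$ falls in a bin $(a_{i-1},a_i]$ with $a_i\le 2r_x$. The estimate $\hat r_x$, which equals the first grid point at or above $r_x$ up to fractional corrections, then satisfies $r_x\le\hat r_x\le 2r_x$. This grid depends only on $n$, which matches the quantifiers in the statement, where $k$ and the discretization may depend on $n$.

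\textbf{Step 3: probabilities and conclusion.} Set $p_x=\min\{1,c\ln n\cdot\hat r_x\}=1-\relu(1-c\ln n\,\hat r_x)$ with the constant $c\ln n$ hardwired into $\vec\theta^*$, since $n$ is fixed. Then $p_x\ge\min(1,c\ln n\cdot r_x)$ and $p_x\le 2c\ln n\cdot r_x$. Apply the proof of \cref{thm:simple_logn}:
\begin{itemize}
\item the expected opening cost of $F_1$ is $\cO(\log n)\sum_x r_x$;
\item each point within distance $r_x$ has probability of opening at least $\min(1,c\ln n\,r_x)$, so the ball $B(x,r_x)$ is hit with sufficient probability, which bounds the $F_2$ and connection terms.
\end{itemize}
Only monotonicity in the lower bound of $p$ is used there. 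Combined with \cref{thm:radii}, this gives the $\cO(\log n)\cdot\textsc{Opt}_S$ bound.
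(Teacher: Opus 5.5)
Your route differs from the paper's. The paper takes a fine grid, asserts the additive bound $|\hat r_x-r_x|\le\Delta$, and absorbs the resulting additive error in the expected cost $\Phi(p)$ into $\cO(\log n)\cdot\mathrm{Opt}_S$ by continuity. You instead aim for a multiplicative sandwich from a geometric grid with $\cO(\log n)$ bins and rerun the proof of \cref{thm:simple_logn}: $\mathbb{E}[|F_1|]$ is controlled by the upper bound on $p$, and the fallback and connection costs by the lower bound. That decomposition is sound. Your upper bound $\hat r_x\le a_j\le 2r_x$, with $a_j$ the first grid point at or above $r_x$, is also correct. The gap is the lower bound $r_x\le\hat r_x$: it is false, and your argument about fractional thresholds runs in the wrong direction. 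For $a_{i-1}<r_x$ you have $t_x^{(i-1)}=g_x(a_{i-1})\in[0,1)$, so the summand $(a_i-a_{i-1})(1-t_x^{(i-1)})$ is at most the bin width, and strictly less once $a_{i-1}>0$. Fractional values can therefore only pull $\hat r_x$ down from $a_j$, possibly below $r_x$. For instance, if no other point lies within distance $1$ of $x$, then $g_x(a)=a$ and $r_x=1$, yet your grid gives $\hat r_x=\sum_{i=1}^k(a_i-a_{i-1})(1-a_{i-1})=\tfrac{2}{3}+\tfrac{1}{3}\,4^{1-k}<1$ for $k\ge 2$. So the inequality $p_x\ge\min(1,c\ln n\cdot r_x)$, on which your fallback and connection bounds rest, is not established.

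The repair is short. Your $\hat r_x$ is the left Riemann sum, on the grid, of the nonincreasing function $h(a)\coloneqq(1-g_x(a))_+$, so $\hat r_x\ge\int_0^1 h(a)\,da$. Moreover, $g_x$ is convex (a sum of ReLUs) with $g_x(0)=0$ and $g_x(r_x)=1$. Hence $g_x(a)\le a/r_x$ on $[0,r_x]$, and therefore $\hat r_x\ge\int_0^{r_x}(1-a/r_x)\,da=r_x/2$. With $r_x/2\le\hat r_x\le 2r_x$, hardwire $2c\ln n$ (with $c\ge 2$) in place of $c\ln n$. Then $\min(1,c\ln n\cdot r_x)\le p_x\le 4c\ln n\cdot r_x$, and the rest of your argument goes through.

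Two smaller corrections. First, your second bullet misdescribes the proof of \cref{thm:simple_logn}. Points of $B(x,r_x)$ can have radii far smaller than $r_x$, so they need not open with probability $\min(1,c\ln n\cdot r_x)$. That proof instead uses the set $T$ of \cref{app:lemma:nearbypoints} (points within $2r_x$, all with $r_y\ge 1/(2|T|)$), and it needs only $p_y\ge\min(1,c\ln n\cdot r_y)$ for $y\in T$, which is exactly the monotonicity you invoke. Second, your formula for $\hat r_x$ is not a change of architecture. It is \cref{eq:hatrx} with the sign of $t_x^{(i-1)}-t_x^{(i)}$ corrected, rewritten by summation by parts using $t_x^{(0)}=0$ and $t_x^{(k)}=1$.

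The paper's own claim $\hat r_x\in[a_{j-1},a_j]$ rests on the same step-function idealization. A fine grid drives $\hat r_x$ to $\int_0^{r_x}(1-g_x(a))\,da$, which is $1/2$ for the isolated point above, not $r_x=1$. So once repaired, your multiplicative route is the more robust of the two: it needs only constant-factor control of $\hat r_x$, whereas the paper's continuity argument would additionally need sharpened thresholds.
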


For our setting where the algorithm is constrained to compute opening probabilities for the facilities and the solution is obtained by opening the facilities independently with these probabilities, we show that there cannot exist a constant-depth (deterministic) MPNN architecture that achieves an approximation factor of $\cO(\log(n))$.

\begin{proposition}\label{thm:lowerbound}
    Let \textsf{mpnn} be an arbitrary (deterministic) MPNN that
\begin{enumerate}
\item  receives as input an edge-weighted graph $G_S$ that encodes an instance $\cS\coloneqq (\cX,d)$  of the uniform facility location problem, and 
\item computes opening probabilities $p_v$ for every vertex $v\in V(G_S)$.
\end{enumerate}
 Then there is $n_0$ such that for every $n\ge n_0$ there exists an
    instance finite metric space $S \coloneqq (\cX,d), with |\cX|=n$,
    such that
    \begin{equation*}
    \mathbb{E}\left[|F| + \sum_{p\in X} \min
    \{1,\dist(p,F)\}\right] \ge \frac{\ln(n)}{2} \cdot \mathrm{Opt}_S.
    \end{equation*}
\end{proposition}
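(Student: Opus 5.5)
We prove \cref{thm:lowerbound} by an indistinguishability argument: build instances in which many vertices have the same local view, so the MPNN must give all of them the same opening probability, although the right choice depends on global structure the MPNN cannot see. Fix an MPNN of constant depth $L$; any deterministic MPNN has some fixed number of layers. Since it is deterministic, the probability $p_v$ it outputs is a function of the isomorphism type of the depth-$L$ unrolled (edge-weighted) neighborhood of $v$. Note that the graph $G_S$ itself may depend on $n$, since $n$ is a feature in our architecture; the construction below does not need it to.

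\textbf{Construction.} Take the instance to be a disjoint union of copies of one gadget. Pairs of points in different copies are at distance greater than $1$, so no edges cross between copies. Within a gadget, pairwise distances are chosen so that every vertex's $L$-hop view is identical. The natural candidate is a uniform clique: $m$ points with all pairwise distances equal to some $\varepsilon<1$.

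In such a clique every vertex receives the same probability $p$. Its optimal cost is about $1+(m-1)\varepsilon$, since one facility suffices. We then need the expected cost of the randomized independent opening to exceed $\frac{\ln n}{2}$ times this. With $q=(1-p)^m$, the expected cost per gadget is
\begin{equation*}
mp + q\cdot m\cdot \min\{1,\cdot\} + (1-q)\cdot\varepsilon(\ldots).
\end{equation*}
Here the middle term means that if nothing opens, every point pays $1$ under the $\min\{1,\dist\}$ convention, or opens. The first term forces $p\lesssim 1/m$, and then $q$ is a constant, so the cost is about $\min\{mp,\ \ldots\}$, which is $\Omega(1)$ relative to an optimum of about $1$. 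This yields only a constant factor.

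To get $\ln n$, the mixing has to occur across scales, since one value of $p$ cannot be simultaneously good for gadgets of different sizes. So instead mix two kinds of components whose vertices look locally identical but whose optimal densities differ. Concretely, use an unweighted structure in which all edges have weight $\varepsilon$: $d$-regular graphs of high girth, where local views are all the same infinite $d$-regular tree, with the shortest-path metric truncated at $1$. One copy should have large girth, while another is a dense clique-like graph.

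\textbf{Counting the $\ln n$.} The key estimate resembles coupon collector. Suppose there are $n/m$ groups that look identical locally, each of which must contain a facility (otherwise its $m$ points pay $m$), and suppose the optimum is $n/m$. Each group must then be hit with probability $1-O(1/m)$, which requires $p\ge \ln m/m$ and gives opening cost $n\ln m/m = \ln m\cdot\textsc{Opt}$. Taking $m$ polynomial in $n$ gives the bound $\frac{\ln n}{2}$. So the plan is to realize groups of size $m\approx\sqrt n$ that are indistinguishable from a denser configuration. Alternatively, one can avoid a second configuration: if $p<\ln m/(2m)$, the probability that a group gets no facility is at least $m^{-1/2}$, so the expected uncovered cost is at least $n/\sqrt m$, which is at least $\sqrt m\cdot\textsc{Opt}$ and exceeds $\ln n\cdot\textsc{Opt}$. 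With a clique gadget, the group is the clique; choosing $\varepsilon$ tiny makes $\textsc{Opt}\approx n/m$.

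Then minimize $f(p)=np+(n/m)\cdot m(1-p)^m\cdot(\text{cost per uncovered point})$ over $p$. If an uncovered point can pay only $\le 1$, as in $\min\{1,\dist\}$, the loss from an uncovered group is $m$. The minimum of $np+n e^{-pm}$ is $\approx (n/m)(1+\ln m)$, which is $\approx\ln m\cdot\textsc{Opt}$. With $m=\sqrt n$ this is at least $\frac{\ln n}{2}\textsc{Opt}$ after accounting for lower-order terms; perhaps $m=n^{1-o(1)}$ is needed to get the constant $\tfrac12$ exactly. This clique-union calculation is the core of the proof.

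The main obstacle is the bookkeeping needed to obtain the exact constant $\frac{\ln n}{2}$ for all $n\ge n_0$. This requires choosing $m$ and $\varepsilon$ (with $\varepsilon\to0$ so that $\textsc{Opt}\to n/m$), handling $n$ not divisible by $m$ (pad with a few isolated points, which cost $1$ each in both the algorithm and the optimum), and verifying the one-variable minimization $\min_p\{np+n(1-p)^m\}\ge (n/m)\ln m\,(1-o(1))$.
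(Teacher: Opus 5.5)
\textbf{There is a genuine gap, and it begins with your dismissal of the single clique.} For a uniform $m$-clique you argue that keeping the opening term $mp$ comparable to $\mathrm{Opt}\approx 1$ forces $p\lesssim 1/m$. You then say that $q=(1-p)^m$ is constant, so the cost is only a constant factor above $\mathrm{Opt}$. But in that regime the uncovered term is $q\cdot m=\Theta(m)$, not $O(1)$. The two terms of $mp+m(1-p)^m$ balance only near $p\approx \ln m/m$, where the cost is $\approx 1+\ln m$ against $\mathrm{Opt}\approx 1$. Your own later formula $(n/m)(1+\ln m)$ with $n/m=1$ says the same. So one clique on all $n$ points, with all distances $\varepsilon$, already suffices, and this is exactly the paper's proof. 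By symmetry the deterministic MPNN outputs a single $p$ for every vertex, $\mathrm{Opt}_S=1+(n-1)\varepsilon$, and $\mathbb{E}[\mathrm{cost}]\ge pn+n(1-p)^n$.

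\textbf{Because of that slip, the construction you settle on is weaker and, as specified, wrong for general $n$.} You take about $n/m$ cliques of size $m\approx\sqrt n$ and pad the remainder with isolated points. The cost decomposes over cliques, so each clique gives ratio only $\approx 1+\ln m\approx 1+\frac12\ln n$, leaving an additive slack of $1$. The up to $m-1$ isolated points add the same amount to both sides and can double $\mathrm{Opt}$. Concretely, for $n=m^2+m-1$ you get $m$ cliques plus $m-1$ isolated points. An MPNN that outputs $p=\ln m/m$ on clique vertices then has ratio about $\frac{m(2+\ln m)}{2m}\approx 1+\frac14\ln n$, which is below $\frac12\ln n$ for large $n$. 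With $m=n^{1-o(1)}$ the padding can dominate $\mathrm{Opt}$ and drive the ratio to $1$. Your alternative case split likewise gives only $\min\{\sqrt m,\frac12\ln m\}$, which reaches $\frac12\ln n$ only at $m=n$.

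\textbf{The fix is simply $m=n$.} Take $\varepsilon=n^{-2}$, so $\mathrm{Opt}_S\le 1+\frac1n$.
\begin{itemize}
\item If $p\ge\frac{\ln n}{2n}(1+\frac1n)$, the opening cost alone gives the bound.
\item Otherwise, using $1-p\ge e^{-p-p^2}$ for $p\le\frac12$, you get $(1-p)^n\ge e^{-pn-p^2n}\ge(1-o(1))n^{-1/2}$. The expected uncovered cost is then $\gtrsim\sqrt n$, which exceeds $\frac{\ln n}{2}(1+\frac1n)$ for large $n$.
\end{itemize}
This case split is more careful than the paper's displayed chain, whose step $(1-p)^n n\ge e^{-pn}n$ runs in the wrong direction. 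The high-girth regular graphs and the mixing of two configurations are not needed.
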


\subsection{Recursively solving UniFL with GNNs}\label{sec:recursion}

Above, we showed that there exists a constant depth and width MPNN architecture that leads to a $\cO(\log(n))$-factor approximation for the UniFL. In the following, we extend this to constant-factor approximation. To that, we first devise a discrete constant-factor approximation that recursively applies \textsc{SimpleUniformFL}. To that, we will slightly modify our approach from the previous section, allowing us to leave some facilities unassigned. The sets of open and unassigned facilities are returned. The algorithm may therefore also receive a set of facilities as input, expected to come from an earlier invocation; see~\textsc{UniformFLRecursionStart} for the algorithm's pseudo-code.  

\begin{tabbing}
{\textsc {UniformFLRecursionStart}}($\cX,d$)\\
1. \hspace{0.5 cm} \= $F=\emptyset$; $X=\cX$ \\ 
2. \> \textbf{while} $X\not= \emptyset$ \textbf{do} \\
3. \> \hspace{0.3cm}\= $S, R = \textsc{RecursiveUniformFL}(X,F,d)$\\
4. \>\> $F = F \cup S$\\
5. \>\> $X = R$\\
6. \> \textbf{return} $F$\\
\end{tabbing}

\begin{tabbing}
{\textsc{RecursiveUniformFL}}($\cX, F, d$)\\
1.\hspace{0.2cm}\= \textbf{for each } $x\in \cX$ \textbf{ do in parallel} compute the radius $r_x$ \\
2. \> \textbf{for each } $x\in \cX$ 
\textbf{ do in parallel} \\
3. \>\hspace{0.2cm} \=
independently open a facility at $x$ with probability\\
\> \hspace{0.2cm} $\min\{1,c\cdot d(x,F), c  \cdot r_x\}$\\
\> \hspace{0.2cm} for an appr. constant $c>0$  \\
4.\> Let $F$ be the set of open facilities\\
5.\> $R =\emptyset$\\
6. \> {\textbf{for each }} $x\in \cX$ \textbf {do in parallel}
\\
7. \>\>
\textbf{ if } there is a facility $f$ in $F$ with $d(x,f)\le 6 r_x$ \\
\>\> \textbf{ then }  assign $x$ to the closest $f\in F$\\
8. \>\>\textbf{ else } $R = R \cup \{x\}$\\
9. \> \textbf{return} $F,R$
\end{tabbing}

\subsection{Learning to solve UniFL from finite data}

While~\cref{simple_mpnn} shows that there exist parameters such that an MPNN can achieve an $\cO(\log(n))$-factor approximation to the UniFL, it is unclear how to learn a generalizing parameter assignment that works for all metric spaces of a given size. Hence, the following result show that for every size $n \in \Nb$, there exists an MPNN, a finite training set, and a regularization term such that minimizing the loss and the regularization term leads to a parameter assignment that recovers the opening probabilities of the MPNN architecture devised in~\cref{thm:simulation}, leading to an $\cO(\log(n))$-factor approximation algorithm for any finite metric space of size $n$.

\begin{proposition}\label{thm:learn}
For every $n \in \Nb$, there exists an MPNN architecture $\textsf{mpnn}^{(n)}_{\vec{\theta}} \colon \cV(\cG^{\mathrm{C}}_n) \to [0,1]$, with parameters $\vec{\theta}$, such that for $\varepsilon \in (0,1)$, there is $\varepsilon' > 0$, and a \emph{finite} training datasets $T_{n,\varepsilon} \subset \cV(\cG^{\mathrm{C}}_n) \times \Rb$ consisting of pairs $((G,v),p_v)$, where $G \in \cG^{\mathrm{C}}_n$ is a finite, edge-weighted graph encoding an UFL instance, $v \in V(G)$, and $p_v$ is the corresponding opening probability according to~\cref{thm:simulation}, and a differentiable regularization term $r_n$ such that, such that
\begin{equation*}
\frac{1}{|T_{n,\varepsilon}|} \sum_{((G,v), p_v) \in T_n} | \textsf{mpnn}^{(n)}_{\vec{\theta}}(G,v) - p_v | + r_n(\vec{\theta}) \leq \varepsilon'  
\end{equation*}
implies  
\begin{equation*}
\norm{\vec{p}_{G_S}- h_{\vec{\theta}}(G_S)} \leq \varepsilon,
\end{equation*}
for all $G_S \in \cG^{\mathrm{C}}_n$ encoding UniFL instances.
\end{proposition}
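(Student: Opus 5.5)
Our plan is to reduce the statement to a uniform-approximation argument on a compact domain. We pick a parameter space that is compact and in which every parameter gives a Lipschitz map. Then the finite training set serves as an $\varepsilon$-net, and the regularizer keeps the parameters inside the compact region.

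\textbf{Architecture and target.} We fix $n$ and take $\textsf{mpnn}^{(n)}_{\vec{\theta}}$ to be exactly the architecture of \cref{simple_mpnn}. It has the fixed discretization $a_0<\dots<a_k$ from \cref{thm:simulation}, and its parameters $\vec{\theta}$ are the weights of the small FNNs in \cref{eq:updatet}, \cref{eq:hatrx} and the final probability map. The target $p_v$ is the output of $\textsf{mpnn}_{\vec{\theta}^*}$.

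\textbf{Lipschitz continuity.} Because $G\in\cG^{\mathrm{C}}_n$ has edge weights in a compact set and at most $n$ vertices, the input domain $\cV(\cG^{\mathrm{C}}_n)$ is compact. It is a finite union, over graph structures and vertices, of compact cubes of weights. We take $r_n(\vec{\theta})\coloneqq \relu(\norm{\vec{\theta}}-M)$ with $M\ge\norm{\vec{\theta}^*}$, which is differentiable almost everywhere (or use a smooth squared version). Then $r_n(\vec{\theta})\le\varepsilon'$ forces $\vec{\theta}$ into the compact ball $B_{M+\varepsilon'}$. On this ball the map $(\vec{\theta},G,v)\mapsto\textsf{mpnn}_{\vec{\theta}}(G,v)$ is a composition of ReLU networks, sums of at most $n$ terms, and clipping. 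It is therefore jointly Lipschitz with a constant $L$ depending only on $n$, $k$ and $M$. The target $p$ is Lipschitz in the edge weights for the same reason.

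\textbf{Training set and the deduction.} Let $T_{n,\varepsilon}$ contain, for each graph structure on $n$ vertices and each vertex $v$, a $\delta$-net of the compact weight cube, with $\delta=\varepsilon/(4L\sqrt{n})$. This set is finite. For an arbitrary $G_S$, every vertex $v$ has a net point $(G',v)$ within $\delta$. Then
\[
|\textsf{mpnn}_{\vec{\theta}}(G_S,v)-p_v|\le 2L\delta+|\textsf{mpnn}_{\vec{\theta}}(G',v)-p'_v|.
\]
The last term is at most $|T_{n,\varepsilon}|\varepsilon'$, because the averaged loss is at most $\varepsilon'$. Choosing $\varepsilon'=\varepsilon/(4\sqrt{n}\,|T_{n,\varepsilon}|)$ makes each coordinate at most $\varepsilon/\sqrt n$. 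The Euclidean norm over the $n$ vertices is then at most $\varepsilon$, where $h_{\vec{\theta}}(G_S)$ is the vector of outputs.

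\textbf{Main obstacle.} The delicate step is obtaining a uniform Lipschitz constant $L$, in particular making $\delta$ independent of $\vec{\theta}$ even though $\vec{\theta}$ is only constrained by the regularizer. A circular dependence must also be avoided: $\varepsilon'$ depends on $|T_{n,\varepsilon}|$, which depends on $L$, which depends on $M$ and the slack $\varepsilon'$. We break this by first fixing $\varepsilon'\le 1$, so that the ball radius is at most $M+1$ and $L$ is determined, and only then shrinking $\varepsilon'$ further.

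A second subtlety is that the graph structure $E(G_S)$ changes discontinuously when a weight crosses $1$. The edge-local terms $\relu(a_i-d)$ vanish for $d\ge a_k=1$, so the output is continuous across this threshold anyway. We would verify this first; if needed, we would treat the structures as a finite partition and cover each piece separately.
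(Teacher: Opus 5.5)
Your proof is correct, and its skeleton matches the paper's sketch. Both use Lipschitz continuity of the network in its input, a finite cover of the instance space as the training set, a regularizer that keeps the learned network's Lipschitz constant bounded, and the bound $|T_{n,\varepsilon}|\,\varepsilon'$ on each individual training error.

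The instantiation differs. The paper switches to order-normalized sum aggregation. It imports a pseudo-metric on $\cV(\cG_n)$ under which every $\textsf{mpnn}^{(n)}_{\vec{\theta}}$ is $M_{\vec{\theta}}$-Lipschitz and which has finite covering numbers. It then regularizes the Lipschitz constant itself via $\eta\,\relu(M_{\vec{\theta}}-M^*)$ and concludes in the sup norm. You instead use that, for fixed $n$, the domain splits into finitely many labeled graph structures, each carrying a bounded set of edge-weight vectors, so plain Euclidean covers suffice and no normalization is needed. You also regularize the parameter norm, which yields a Lipschitz bound uniform over all admissible $\vec{\theta}$.

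Your version is more elementary and self-contained:
\begin{itemize}
\item your regularizer is computable and, in the squared variant, genuinely differentiable, whereas $M_{\vec{\theta}}$ is neither obviously computable nor differentiable in $\vec{\theta}$;
\item you resolve the $\varepsilon'$--$L$ circularity explicitly;
\item you track the factor $\sqrt{n}$ needed for a Euclidean norm.
\end{itemize}
The paper's graph-metric framework, by contrast, treats all structures in a single space and is closer to the size-generalization viewpoint it has in mind.

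One small repair is needed. The statement requires the training graphs to encode valid UniFL instances. The admissible weight vectors for a given structure form a proper, not necessarily closed, subset of the weight cube, because of the triangle inequality and the requirement that non-edges are at distance greater than $1$. So take your $\delta$-net inside that subset rather than in the whole cube, for example one valid representative in each cell of a $\delta/2$-net of the cube. This only needs total boundedness, not compactness.
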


\section{Experimental study}\label{sec:experiments}

Here, we empirically assess the extent to which our theoretical results translate into empirical results and whether we can achieve improved empirical approximation ratios relative to the theoretical worst-case bounds.  Concretely, we answer the following questions. 

\begin{description}[leftmargin=*]
\item[Q1] Does the MPNN architecture from~\cref{simple_mpnn} lead to improved approximation compared to the non-learned \textsc{SimpleUniformFL}?
\item[Q2] Do the size generalization results of \cref{thm:learn} translate into small generalization error on larger instances than seen during training?
\item[Q3] Can the MPNN architecture also compete with scalable clustering methods such as KMeans++ and KMedoids++?

\end{description}

\textbf{Data generation} 
We generate synthetic random geometric graphs to evaluate the model's performance with 2, 5, or 10 dimensions. Each graph consists of \num{1000} vertices. Node positions are sampled from a Gaussian mixture model with 100 isotropic Gaussian components and random centroids. We construct the graph structure by creating an edge $(u, v)$ within a unit Euclidean distance. Additionally, self-loops are added to all vertices to capture situations in which a facility serves itself. We generate $\num{10000}$ such graphs for each dataset, splitting them into training, validation, and test sets with a $8{:}1{:}1$ ratio. Furthermore, to evaluate size generalization, we generate additional test sets with graph sizes ranging from \num{2000} to \num{10000} vertices, with the same Euclidean dimensions and an average node degree approximately equal to that of the training set. Detailed statistics, including node counts and average degrees, are provided in \cref{tab:ds_stats}. 

To assess applicability to realistic structures, we utilize the city map dataset introduced by \citet{liang2025towards}. This collection consists of four large-scale graphs representing metropolitan road networks, where vertices correspond to junctions and edges represent roads weighted by their lengths. We preprocess these graphs by normalizing the edge weights so that the maximum road length in the network is 1. Notably, these graphs are not strictly geometric, as edge weights may violate the triangle inequality due to the nonlinear geometry of roads (e.g., detours). However, the dataset remains valuable for real-world applicability, where optimal facility location is constrained by realistic edge weights rather than ideal Euclidean distance. 

\textbf{Experimental protocol}
We formulate the UniFL as an integer-linear program (see~\cref{app:exp} for details), to compute the optimal solution. Besides, we leverage the algorithms \textsc{SimpleUniformFL} and \textsc{UniformFLRecursionStart} as baselines. They are fully stochastic and serve as a non-learnable counterpart of our MPNN architectures. The constant hyperparameter $c$ of them is tuned via grid search. Furthermore, we include Algorithm 3.1 ($\cO(1)$-UFL) in \citet{GehweilerLammersenSohler2014} as a tuning-free baseline. The baselines are all evaluated by averaging $\num{1000}$ samples with fixed seeds per instance.

For the neural method, we train an MPNN using the proposed unsupervised loss in \Cref{eq:unsupervise_loss} with 5 different seeds. During inference, the trained model guides sampling in \textsc{SimpleUniformFL}, and we also report the expected cost over $\num{1000}$ samples. To evaluate performance on the $k$-Means objective, we introduce a variant based on \Cref{eq:unsupervise_loss}, replacing the distance in the last term with squared Euclidean distance. We compare this against standard $k$-Means and $k$-Medoids baselines, fixing the number of clusters $k$ to the MPNN's predicted facility count to ensure a fair comparison.

We report the cost components and the optimality ratio relative to the ILP solver. Computational efficiency is measured as the wall-clock time for a single solution generation. That is, for baselines, we compute the radii, while for the MPNN-guided approach, we measure the MPNN inference time, plus one sampling step. Experiments were conducted on an NVIDIA L40S GPU (for neural and stochastic methods) and an Intel\textsuperscript{\textregistered} Xeon\textsuperscript{\textregistered} Silver 4510 CPU (for ILP and $k$-Means). A detailed description of hyperparameters, architecture, and experiment protocol is provided in \Cref{app:exp}.

\begin{table}[htb!]
\caption{Performance comparison on synthetic geometric graphs in various dimensions. Results of stochastic methods are averaged over $\num{1000}$ samples. Besides, MPNN results show the mean $\pm$ standard deviation over five training runs. }
\label{tab:geo_results}
\centering
\resizebox{0.45\textwidth}{!}{
\begin{tabular}{cc|ccccc}
\toprule
\textbf{Candidate} & \textbf{Cost} & Geo-1000-2  & Geo-1000-5 & Geo-1000-10 & Geo-1000-10-sparse & Geo-1000-10-dense \\
\midrule
\multirow{4}{*}{Solver} & Open & 366.302 & 369.884 & 291.488 & 686.578 & 158.483 \\
& Con. & 279.827 & 403.072  & 520.611 & 265.496 & 509.294\\
& Total & 646.129 & 772.956 & 812.099 & 952.074 & 667.777\\
& Time & 0.263 & 0.299 & 0.689 & 0.089 & 8.179 \\
\midrule
\multirow{5}{*}{SimpleUFL} & Open & 512.082 & 523.173 & 436.602 & 816.256 & 243.329 \\
& Con. & 241.435 & 320.425 & 431.908 & 155.263 & 499.281 \\
& Total & 753.517 & 843.598 & 868.511 & 971.519 & 742.610 \\
& Time & 0.001 & 0.001 & 0.001 & 0.001 & 0.001 \\
& Ratio & 1.166 & 1.091 & 1.069 & 1.020 & 1.112 \\
\midrule
\multirow{5}{*}{RecursiveUFL} & Open & 350.491 & 414.067 & 367.162 & 734.623 & 221.263 \\
& Con. & 367.832 & 415.522 & 496.749 & 227.582 & 520.609 \\
& Total & 718.323 & 829.589 & 863.912 & 962.205 & 741.872 \\
& Time & 0.015 & 0.006 & 0.004 & 0.014 & 0.003 \\
& Ratio & 1.112 & 1.073 & 1.064 & 1.010 & 1.110 \\
\midrule
\multirow{5}{*}{$\cO(1)$-UFL} & Open & 613.904 & 715.915 & 731.670 & 878.267 & 653.293 \\
& Con. & 148.977 & 168.827 & 191.341 & 102.478 & 190.738 \\
& Total & 762.881 & 884.742 & 923.011 & 980.746 & 844.032 \\
& Time & 0.001 & 0.001 & 0.001 & 0.001 & 0.001 \\
& Ratio & 1.181 & 1.144 & 1.137 & 1.030 & 1.263 \\
\midrule
\multirow{5}{*}{\textbf{MPNN}} & Open & 380.816$\pm$\scriptsize0.761 & 381.572$\pm$\scriptsize0.070  & 298.689$\pm$\scriptsize1.334 & 711.482$\pm$\scriptsize0.182 & 159.510$\pm$\scriptsize2.513 \\
& Con. & 271.317$\pm$\scriptsize0.735 & 394.342$\pm$\scriptsize0.072 & 515.191$\pm$\scriptsize1.301 & 243.491$\pm$\scriptsize0.184 & 510.431$\pm$\scriptsize2.569 \\
& Total & 652.133$\pm$\scriptsize0.025 & 775.914$\pm$\scriptsize0.015 & 813.879$\pm$\scriptsize0.048 & 954.973$\pm$\scriptsize0.003 & 669.942$\pm$\scriptsize0.056 \\
& Time & 0.004$\pm$\scriptsize0.000 & 0.004$\pm$\scriptsize0.000 & 0.004$\pm$\scriptsize0.000 & 0.004$\pm$\scriptsize0.000 & 0.006$\pm$\scriptsize0.000 \\
& Ratio & \textbf{1.009}$\pm$\scriptsize0.000 & \textbf{1.003}$\pm$\scriptsize0.000 &  \textbf{1.002}$\pm$\scriptsize0.000 & \textbf{1.003}$\pm$\scriptsize0.000 & \textbf{1.003}$\pm$\scriptsize0.000 \\
\bottomrule
\end{tabular}
}
\vspace{-10pt}
\end{table}

\subsection{Results and discussion}

Here, we present results that answer questions \textbf{Q1} to \textbf{Q3}.

\textbf{Approximations ratios and computation times (\textbf{Q1})} To answer \textbf{Q1}, the quantitative results on synthetic geometric graphs are presented in \cref{tab:geo_results}, detailing the facility opening cost, connection cost, total cost, optimality ratio, and execution time. 
As expected, the stochastic heuristics \textsc{SimpleUniformFL} and $\cO(1)$-UFL \citep{GehweilerLammersenSohler2014} exhibit higher costs. A closer look at the cost reveals that $\cO(1)$-UFL \citep{GehweilerLammersenSohler2014} tends to open more facilities than necessary, thereby drastically reducing connection costs but increasing total costs. The recursive refinement in \textsc{UniformFLRecursionStart} mitigates this, achieving ratios mostly within the $10\%$ theoretical bound in \cref{app:lemma:Recursive2}. Slight deviations are observed in both the 2D and dense 10D datasets. 
Most notably, our MPNN architecture consistently outperforms all baselines, achieving near-optimal ratios across all datasets. Crucially, the MPNN also reproduces the balance between opening and connection costs. For instance, on Geo-1000-5, the exact solver gives opening/connection costs at roughly a $370/403$ ratio, while our MPNN architectures arrive at a similarly $381/394$, indicating the usefulness of the unsupervised loss in \cref{eq:unsupervise_loss}. Furthermore, the MPNN architecture achieves high precision with negligible computational overhead, with the inference time of 4ms on a GPU. 

\begin{table}[htb!]
\caption{Performance comparison on real-world graphs. Baseline results are averaged over $\num{1000}$ samples, while MPNN results show the mean $\pm$ standard deviation over five training runs.}
\label{tab:maps}
\centering
\resizebox{\linewidth}{!}{
\begin{tabular}{cc|cccc}
\toprule
\textbf{Candidate} & \textbf{Cost} & Paris & London & Shanghai & LA  \\
\midrule
\multirow{4}{*}{Solver} & Open & 60046 & 247059
 & 90967 & 116398 \\
& Con. & 1940.919 & 1649.157 & 548.949 & 1266.352 \\
& Total & 61986.919 & 248708.157 & 91515.949 & 117664.352 \\
& Time & 24h & 24h & 24h & 24h \\
\midrule
\multirow{4}{*}{SimpleUFL} & Open & 60740.267 & 334712.636 & 104773.305 & 137612.179 \\
& Con. & 508.156 & 983.809 & 386.4197 & 864.918 \\
& Total & 61248.423 & 335696.445 & 105159.725 & 138477.097 \\
& Time & 0.001 & 0.004 & 0.001 & 0.002 \\
\midrule
\multirow{4}{*}{RecursiveUFL} & Open & 45421.176 & 255379.607 &  79582.307 & 104533.533 \\
& Con. & 779.963 & 1525.648 &  583.267 & 1326.325 \\
& Total & 46201.139 & 256905.256 & 80165.574 & 105859.858 \\
& Time & 0.019 & 0.022 & 0.017 & 0.017 \\
\midrule
\multirow{4}{*}{$\cO(1)$-UFL} & Open & 49408.523 & 271161.191 & 85041.974 & 111659.400 \\
& Con. & 675.359 & 1386.709 & 525.932 & 1191.649 \\
& Total & 50083.882 & 272547.900 & 85567.906 & 112851.049 \\
& Time & 0.067 & 0.379 & 0.113 & 0.146 \\
\midrule
\multirow{4}{*}{\textbf{MPNN}} & Open & 35909.008$\pm$\scriptsize280.415 & 185331.256$\pm$\scriptsize607.447 & 60728.737$\pm$\scriptsize447.568 & 77842.859$\pm$\scriptsize317.723 \\
& Con. & 779.735$\pm$\scriptsize6.160 & 2160.849$\pm$\scriptsize19.504 & 801.318$\pm$\scriptsize11.597 & 1757.799$\pm$\scriptsize6.267 \\
& Total & \textbf{36688.742}$\pm$\scriptsize281.216 & \textbf{187492.106}$\pm$\scriptsize591.426 & \textbf{61530.055}$\pm$\scriptsize436.448 & \textbf{79600.658}$\pm$\scriptsize314.253 \\
& Time & 0.078$\pm$\scriptsize0.000 & 0.390$\pm$\scriptsize0.000 & 0.125$\pm$\scriptsize0.001 & 0.166$\pm$\scriptsize0.000 \\
\bottomrule
\end{tabular}
}
\end{table}

The performance on the real-world city map graphs is summarized in \cref{tab:maps}. For all four cities, the exact ILP solver failed to converge within 24 hours. In fact, due to the early termination, the solver returned feasible solutions with significantly higher costs than the heuristic methods in several cases, e.g., Paris. Therefore, we cannot show the optimality ratio on these graphs. Among the baselines, \textsc{RecursiveUFL} remains the strongest competitor. Our MPNN architecture achieves the lowest total cost across all four datasets. For instance, on the Paris dataset, the GNN reduces the total cost by approximately $41\%$ compared to the solver's incumbent solution. 

\textbf{Size generalization (\textbf{Q2})} To answer \textbf{Q2}, we test the model trained on $\num{1000}$-vertex graphs on larger instances up to $\num{10000}$ vertices without retraining. As shown in \cref{tab:sizegen10} and \cref{tab:sizegen}, the approach exhibits remarkable robustness. The optimality ratio remains nearly constant, i.e., on 2D space increasing merely from $1{.}009$ on the training size to $1{.}012$ on graphs $10\times$ larger, and on 10D space $1{.}002$ to $1{.}003$. This indicates that the MPNN successfully learns the data distribution and generalizes effectively to larger-scale instances. 

\begin{table}[htb!]
\caption{Results on size generalization, shown in mean $\pm$ standard deviations over five runs. MPNN architectures are trained on Geo-1000-10 and tested on larger instances of 10D Euclidean space. }
\label{tab:sizegen10}
\centering
\resizebox{0.48\textwidth}{!}{
\begin{tabular}{cc|ccccc}
\toprule
\textbf{Candidate} & \textbf{Cost} & Geo-1000-10 & Geo-2000-10 & Geo-3000-10 & Geo-5000-10 & Geo-10000-10 \\
\midrule
\multirow{4}{*}{Solver} & Open & 291.488 & 638.320 & 1023.900 & 1914.170 & 3961.250 \\
& Con. & 520.611 & 1020.747 & 1500.204 & 2382.913 & 4681.036 \\
& Total & 812.099 & 1659.067 & 2524.104 &  4297.083 & 8642.286 \\
& Time & 0.689 & 2.675 & 5.158 & 12.053 & 55.200 \\
\midrule
\multirow{5}{*}{\textbf{MPNN}} & Open & 298.689$\pm$\scriptsize1.334 & 653.247$\pm$\scriptsize2.835 & 1050.268$\pm$\scriptsize4.658 & 1987.857$\pm$\scriptsize16.523 & 4106.202$\pm$\scriptsize33.508 \\
& Con. & 515.191$\pm$\scriptsize1.301 & 1009.599$\pm$\scriptsize2.807 & 1479.893$\pm$\scriptsize4.669 & 2322.339$\pm$\scriptsize15.082 & 4563.699$\pm$\scriptsize31.378 \\
& Total & 813.879$\pm$\scriptsize0.048 & 1662.84762$\pm$\scriptsize0.135 & 2530.162$\pm$\scriptsize0.141 & 4310.197$\pm$\scriptsize1.452 & 8669.901$\pm$\scriptsize2.278 \\
& Time & 0.004$\pm$\scriptsize0.000 & 0.006$\pm$\scriptsize0.000 & 0.006$\pm$\scriptsize0.000 & 0.007$\pm$\scriptsize0.000 & 0.009$\pm$\scriptsize0.000 \\
& Ratio & 1.002$\pm$\scriptsize0.000 & 1.002$\pm$\scriptsize0.000 & 1.002$\pm$\scriptsize0.000 & 1.003$\pm$\scriptsize0.000 & 1.003$\pm$\scriptsize0.000 \\
\bottomrule
\end{tabular}
}
\vspace{-10pt}
\end{table}

In addition to generalizing to larger sizes, we conducted additional experiments to evaluate how our MPNN generalizes to graphs with varying average node degrees. We took the model pretrained on our Geo-1000-2 dataset and evaluated it without any fine-tuning on three new test sets featuring increasingly dense neighborhoods, Geo-1000-2-d10, Geo-1000-2-d12, and Geo-1000-2-d17, where the postfix denotes the average node degree.
The results are as follows in \cref{tab:sizegen_degree}:

\begin{table}[htb!]
\caption{Results on size generalization, shown in mean $\pm$ standard deviations over five runs. MPNN architectures are trained on Geo-1000-2 and tested on higher-degree graphs.}
\label{tab:sizegen_degree}
\centering
\resizebox{\linewidth}{!}{
\begin{tabular}{cc|cccc}
\toprule
\textbf{Candidate} & \textbf{Cost} & Geo-1000-2  & Geo-1000-2-d10 & Geo-1000-2-d12 & Geo-1000-2-d17 \\
\midrule
\multirow{4}{*}{Solver} & Open & 366.302 & 297.120 & 261.190 & 225.340 \\
& Con. & 279.827 & 282.319 & 273.323 & 255.202 \\
& Total & 646.129 & 579.439 & 534.513 & 480.542 \\
& Time & 0.263 & 0.540 & 0.748 & 1.158 \\
\midrule
\multirow{5}{*}{\textbf{MPNN}} & Open & 380.816$\pm$\scriptsize0.761 & 295.560$\pm$\scriptsize2.069  & 253.666$\pm$\scriptsize4.807 & 225.457$\pm$\scriptsize4.101 \\
& Con. & 271.317$\pm$\scriptsize0.735 & 290.288$\pm$\scriptsize1.995 & 294.991$\pm$\scriptsize4.249 & 314.785$\pm$\scriptsize11.337 \\
& Total & 652.133$\pm$\scriptsize0.025 & 585.848$\pm$\scriptsize0.075 & 548.657$\pm$\scriptsize0.785 & 540.243$\pm$\scriptsize8.703 \\
& Time & 0.004$\pm$\scriptsize0.000 & 0.004$\pm$\scriptsize0.000 & 0.004$\pm$\scriptsize0.000 & 0.005$\pm$\scriptsize0.000 \\
& Ratio & 1.009$\pm$\scriptsize0.000 & 1.011$\pm$\scriptsize0.000 & 1.026$\pm$\scriptsize0.001 & 1.124$\pm$\scriptsize0.018 \\
\bottomrule
\end{tabular}
}
\end{table}

As the results show, generalizing across varying node degrees is more challenging than scaling to larger graph sizes. While the inference time remains fast and stable (around 0.004s), the optimality gap gradually widens from 1.002 on the training data distribution to 1.124 on the densest test set. This behavior intuitively makes sense: increasing the graph size while holding the average degree constant preserves the local neighborhood structure, allowing the MPNN to rely on familiar local patterns. In contrast, increasing the average node degree fundamentally alters the structural topology, shifting the distribution of the messages aggregated at each node during inference. This observation aligns directly with recent theoretical work \citet{Lev+2025}. 

Drawing inspiration from recent graph foundation models \citet{zhao2024fully,beaini2023towards}, pre-training an MPNN on massive, structurally diverse datasets could yield better generalization across varying topologies without retraining. We conducted further experiments using a structurally diverse training set. We generated $\num{10000}$ mixed samples (800–1200 nodes, with average degrees ranging from 5 to 15), matching our original training data budget used in the other experiments. Evaluating this new pretrained model on our size-generalization in \cref{tab:sizegen} yields the following results in \cref{tab:sizegen_mixed_dataset}:

\begin{table}[htb!]
\caption{Results on size generalization, shown in mean $\pm$ standard deviations over five runs. MPNN architectures are trained on a mixed dataset and tested on larger instances of 2D Euclidean space. }
\label{tab:sizegen_mixed_dataset}
\centering
\resizebox{\linewidth}{!}{
\begin{tabular}{cc|ccccc}
\toprule
\textbf{Candidate} & \textbf{Cost} & Geo-1000-2  & Geo-2000-2 & Geo-3000-2 & Geo-5000-2 & Geo-10000-2 \\
\midrule
\multirow{4}{*}{Solver} & Open & 366.302 & 745.330 & 1201.360 & 1989.970 & 3980.100 \\
& Con. & 279.827 & 548.501 & 813.603 & 1346.195 & 2677.519 \\
& Total & 646.129 & 1293.831 & 2014.963 & 3336.165 & 6657.619 \\
& Time & 0.263 & 0.831 & 1.329 & 3.630 & 13.971 \\
\midrule
\multirow{5}{*}{\textbf{MPNN}} & Open & 385.499$\pm$\scriptsize0.351 & 783.525$\pm$\scriptsize0.616 & 1269.237$\pm$\scriptsize0.564 & 2101.198$\pm$\scriptsize1.773 & 4202.280$\pm$\scriptsize5.900 \\
& Con. & 267.098$\pm$\scriptsize0.280 & 523.752$\pm$\scriptsize0.546 & 769.752$\pm$\scriptsize0.415 & 1274.294$\pm$\scriptsize1.519 & 2535.738$\pm$\scriptsize5.384 \\
& Total & 652.598$\pm$\scriptsize0.072 & 1307.277$\pm$\scriptsize0.069 & 2038.990$\pm$\scriptsize0.149 & 3375.493$\pm$\scriptsize0.254 & 6738.018$\pm$\scriptsize0.516 \\
& Time & 0.004$\pm$\scriptsize0.000 & 0.005$\pm$\scriptsize0.000 & 0.006$\pm$\scriptsize0.000 & 0.006$\pm$\scriptsize0.000 & 0.008$\pm$\scriptsize0.000\\
& Ratio & 1.008$\pm$\scriptsize0.000 & 1.010$\pm$\scriptsize0.000 & 1.011$\pm$\scriptsize0.000 & 1.011$\pm$\scriptsize0.000 & 1.012$\pm$\scriptsize0.000\\
\bottomrule
\end{tabular}
}
\end{table}

This demonstrates that training on a mixed dataset perfectly preserves the excellent size-generalization capabilities seen in \cref{tab:sizegen}.

Crucially, we also evaluated this mixed-trained model on Geo-1000-2-d22 (average degree 22) and Geo-1000-2-d27 (degree 27), and the results are in \cref{tab:sizegen_degree_mixed_ds}. As shown, the mixed training strategy is highly effective: the optimality ratio remains exceptionally stable (1.006 to 1.020) even on higher densities, successfully mitigating the structural generalization degradation observed in our previous single-distribution model.

\begin{table}[htb!]
\caption{Results on size generalization, shown in mean $\pm$ standard deviations over five runs. MPNN architectures are trained on a mixed dataset and tested on higher-degree graphs.}
\label{tab:sizegen_degree_mixed_ds}
\centering
\resizebox{\linewidth}{!}{
\begin{tabular}{cc|ccccc}
\toprule
\textbf{Candidate} & \textbf{Cost}  & Geo-1000-2-d10 & Geo-1000-2-d12 & Geo-1000-2-d17 & Geo-1000-2-d22 & Geo-1000-2-d27 \\
\midrule
\multirow{4}{*}{Solver} & Open  & 297.120 & 261.190 & 225.340 & 205.900 & 191.320\\
& Con.  & 282.319 & 273.323 & 255.202 & 243.163 & 233.445\\
& Total  & 579.439 & 534.513 & 480.542 & 449.063 & 424.765\\
& Time  & 0.540 & 0.748 & 1.158 & 1.558 & 1.854 \\
\midrule
\multirow{5}{*}{\textbf{MPNN}} & Open & 305.514$\pm$\scriptsize1.009 & 263.706$\pm$\scriptsize0.406 & 220.253$\pm$\scriptsize0.763 & 194.649$\pm$\scriptsize0.643 & 176.500$\pm$\scriptsize0.233 \\
& Con. & 277.809$\pm$\scriptsize0.999 & 273.959$\pm$\scriptsize0.373 & 264.343$\pm$\scriptsize0.690 & 260.615$\pm$\scriptsize0.371 & 259.011$\pm$\scriptsize1.102 \\
& Total & 583.324$\pm$\scriptsize0.010 & 537.666$\pm$\scriptsize0.033 & 484.596$\pm$\scriptsize0.072 & 455.264$\pm$\scriptsize0.271 & 435.512$\pm$\scriptsize0.869 \\
& Time & 0.004$\pm$\scriptsize0.000 & 0.004$\pm$\scriptsize0.000 & 0.005$\pm$\scriptsize0.000 & 0.005$\pm$\scriptsize0.001 & 0.005$\pm$\scriptsize0.000 \\
& Ratio & 1.006$\pm$\scriptsize0.000 & 1.006$\pm$\scriptsize0.000 & 1.008$\pm$\scriptsize0.000 & 1.013$\pm$\scriptsize0.001 & 1.020$\pm$\scriptsize0.001 \\
\bottomrule
\end{tabular}
}
\end{table}

\textbf{Comparison with $k$-Means and $k$-Medoids (\textbf{Q3})} To answer \textbf{Q3}, we evaluate the adaptability of our approach by modifying the MPNN loss to minimize the sum of squared distances, effectively simulating the clustering objective. We compare this variant against: (i) Standard $k$-Means++ and $k$-Medoids++ initialization; (ii) Use the MPNN-predicted facility locations as initialization for $k$-Means and $k$-Medoids.
The results in \cref{tab:kmeans_squared} reveal a consistent hierarchy in connection cost (inertia): $k\text{-Means++} < \text{GNN} < k\text{-Medoids++}$. The advantage of $k$-Means++ is well known: it minimizes inertia by placing centroids in an arbitrary continuous space, thereby enabling solutions that are not valid for the facility location problem. However, in the discrete setting where both methods are constrained to place centroids on existing nodes, our MPNN approach consistently outperforms $k$-Medoids++. Moreover, when initialized with MPNN-predicted locations, both $k$-Means and $k$-Medoids have shown improvements in inertia in most cases. This confirms that our framework effectively adapts to clustering objectives. 

\begin{table}[htb!]
\caption{Comparison between MPNN trained with squared connection cost, $k$-Means, and $k$-Medoids. Results are shown as mean $\pm$ standard deviation across five runs trained on each dataset.}
\label{tab:kmeans_squared}
\centering
\resizebox{0.48\textwidth}{!}{
\begin{tabular}{cc|ccc}
\toprule
\textbf{Candidate} & \textbf{Cost} & Geo-1000-2  & Geo-1000-5 & Geo-1000-10 \\
\midrule
\multirow{3}{*}{GNN + SimpleUFL} & Open & 365.167$\pm$\scriptsize1.180 & 359.636$\pm$\scriptsize1.272  & 260.530$\pm$\scriptsize2.172 \\
& Con. & 162.800$\pm$\scriptsize1.156 & 296.984$\pm$\scriptsize1.938 & 444.445$\pm$\scriptsize3.195 \\
& Time & 0.004$\pm$\scriptsize0.000 & 0.004$\pm$\scriptsize0.000 & 0.004$\pm$\scriptsize0.000 \\
\midrule
\multirow{2}{*}{$k$-Means++} & Inertia & 130.161$\pm$\scriptsize0.947 & 223.515$\pm$\scriptsize1.038 & 323.422$\pm$\scriptsize1.886 \\
& Time & 0.042$\pm$\scriptsize0.001 & 0.041$\pm$\scriptsize0.000 & 0.055$\pm$\scriptsize0.000 \\
\midrule
\multirow{2}{*}{$k$-Means (MPNN init)} & Inertia & 132.438$\pm$\scriptsize1.077 & 213.991$\pm$\scriptsize0.353 & 294.124$\pm$\scriptsize0.186 \\
& Time & 0.007$\pm$\scriptsize0.001 & 0.007$\pm$\scriptsize0.000 & 0.013$\pm$\scriptsize0.001 \\
\midrule
\multirow{2}{*}{$k$-Medoids++} & Inertia & 170.904$\pm$\scriptsize1.258 & 312.669$\pm$\scriptsize1.449 & 458.796$\pm$\scriptsize2.592 \\
& Time & 0.028$\pm$\scriptsize0.000 & 0.026$\pm$\scriptsize0.000 & 0.017$\pm$\scriptsize0.000 \\
\midrule
\multirow{2}{*}{$k$-Medoids (MPNN init)} & Inertia & 161.600$\pm$\scriptsize1.148 & 280.005$\pm$\scriptsize0.321 & 398.592$\pm$\scriptsize2.074 \\
& Time & 0.095$\pm$\scriptsize0.002 & 0.095$\pm$\scriptsize0.001 & 0.081$\pm$\scriptsize0.001 \\
\bottomrule
\end{tabular}
}
\end{table}

\section{Limitations and future directions}

While our MPNN frameworks devise the first \emph{unsupervised} neural architecture for the UniFL, several limitations and open problems remain. That is, our approach, which focuses on the UniFL and inherits structural assumptions from the underlying distributed approximation algorithm. While this is essential for preserving worst-case guarantees, it may also restrict flexibility, i.e., the architecture is tailored to UniFL and may require non-trivial redesign to extend to richer variants such as non-uniform opening costs, capacitated facility location, or settings with additional side constraints. 

\emph{Looking forward}, there are several promising directions for future research. First, extending the framework beyond UniFL---e.g., to metric facility location with non-uniform costs, capacitated variants, or related problems such as $k$-median or $k$-center---would broaden its applicability and test its generality, ultimately characterizing the set of problems for which MPNN architectures can learn constant-factor approximations. Secondly, developing tighter theoretical analyses that connect learned message passing to instance-dependent approximation behavior, together with understanding under which conditions variants of gradient descent can successfully minimize the expected loss, would help explain when and why training improves solution quality. 

\section{Conclusion}

Here, we introduced an unsupervised, light-weight, and fully differentiable MPNN architecture for UniFL. By mirroring a distributed approximation procedure---using learned messages to estimate informative \say{radii} and a probabilistic facility-opening mechanism---the architecture can be trained end-to-end without relying on optimal labels or reinforcement learning, while still providing worst-case approximation guarantees. Moreover, our framework provides provable size generalization, i.e., parameters learned on smaller instances transfer to arbitrarily larger graphs at test time. Across experiments, the method improves on standard approximation baselines on natural input distributions and narrows the gap to computationally heavier exact approaches, while remaining scalable. \emph{Overall, our results show that leveraging classical algorithmic ideas can yield \emph{differentiable algorithms with guarantees}, opening the door to a broader class of problems.}


\section*{Impact statement} This paper presents work whose goal is to advance the field of machine learning. There are many potential societal consequences of our work, none of which we feel must be specifically highlighted here.

\section*{Acknowledgments} CM and CQ are partially funded by a DFG Emmy Noether grant (468502433) and RWTH Junior Principal Investigator Fellowship under Germany’s Excellence Strategy. SJ acknowledges funding from the Alexander von Humboldt Foundation.

\bibliography{bibliography}

\newpage
\appendix
\onecolumn
\section{Extended background}\label{app:extended_background}

Here, we introduce additional background material.

\textbf{Continuity on metric spaces} Let $(\cX,d_\cX)$ and $(\cY,d_\cY)$ be two pseudo-metric spaces. A function $ f \colon \cX \to \cY $ is called \new{$c_f$-Lipschitz continuous} if, for $ x,x' \in \cX$,
\begin{equation*}
	d_{\cY} (f(x),f(x'))  \leq c_f \cdot d_{\cX}(x,x').
\end{equation*}

\textbf{Message-passing graph neural networks}\label{def:mpnns} Message-passing graph neural networks (MPNNs) learn a $d$-dimensional real-valued vector of each vertex in a graph by aggregating information from neighboring vertices. Following~\citet{Gil+2017}, let $G$ be an attributed, edge-weighted graph with initial vertex feature $\hb_{v}^\tup{0} \in \Rb^{1 \times d_0}$, $d_0 \in \Nb$, for $v\in V(G)$. An \new{MPNN architecture} consists of a composition of $L$ neural network layers for some $L>0$. In each \new{layer}, $t \in \Nb$,  we compute a \new{vertex feature}
\begin{equation}\label{def:MPNN_aggregation}
	\hb_{v}^\tup{t} \coloneq
		\UPD^\tup{t}\mleft(\hb_{v}^\tup{t-1},\AGG^\tup{t} \mleft(\oms (\hb_v^\tup{t-1},\hb_{u}^\tup{t-1},w_G(v,u))
		\mid u\in N(v) \cms \mright)\mright) \in \Rb^{1 \times d_t},
\end{equation}
$d_t \in \Nb$, for $v\in V(G)$, where $\UPD^\tup{t}$ and $\AGG^\tup{t}$ may be  parameterized functions, e.g., neural networks.
In the case of graph-level tasks, e.g., graph classification, one uses a \new{readout}, where
\begin{equation*}\label{def:MPNN_readout}
	\hb_G \coloneq \RO\mleft( \oms \hb_{v}^{\tup{L}}\mid v\in V(G) \cms \mright) \in \Rb^{1 \times d},
\end{equation*}
to compute a single vectorial representation based on learned vertex features after iteration $L$. Again, $\RO$  may be a parameterized function.

\textbf{Feedforward neural networks}
An $L$-layer \new{feed-forward neural network} (FNN), for $L \in \Nb$,
is a parametric function $\FNN^\tup{L}_{\mathbold{\theta}} \colon \Rb^{1\times d_0} \to \Rb^{1\times d}$, $d_0, d>0$, where $\mathbold{\theta} \coloneq (\vec{W}^{(1)},\vec{b}^{(1)}, \ldots, \vec{W}^{(L)}, \vec{b}^{(L)}) \in \mathbold{\Theta}$, $\vec{W}^{(1)} \in \Rb^{d_0 \times d}$, $\vec{W}^{(i)} \in \Rb^{d \times d}$, $\vec{b}^{(i)} \in \Rb^{1 \times d}$, for $i \in [L]$, and $\mathbold{\Theta}$ is an appropriately chosen set, where
\begin{equation*}
	\vec{x} \mapsto \sigma \mleft( \cdots  \sigma \mleft(  \sigma \mleft(\vec{x}\vec{W}^{(1)} + \vec{b}^{(1)} \mright) \vec{W}^{(2)} + \vec{b}^{(2)} \mright) \cdots \vec{W}^{(L)} + \vec{b}^{(L)} \mright) \in \Rb^{1 \times d},
\end{equation*}
for $\vec{x} \in \Rb^{1\times d}$. Here, the function $\sigma \colon \Rb \to \Rb$ is an \new{activation function}, applied point-wisely, e.g., a \emph{rectified linear unit} (ReLU), where $\sigma(x) \coloneq \max(0,x)$. We define
\begin{equation*}
    \FNN_{L,d_0,d} \coloneqq \mleft\{  \FNN^\tup{L}_{\mathbold{\theta}} \colon \Rb^{1\times d_0} \to \Rb^{1\times d} \mid \mathbold{\theta} \in \vec{\Theta}  \mright\}.
\end{equation*}
By default, we assume that activation functions are always ReLU unless otherwise stated. Given a function $f \colon \Rb^{1\times d_0} \to \Rb^{1\times d}$, we write $f \equiv \FNN_{L,d_0,d}$ if there exists an $L \in \Nb$ and an $f_{\theta} \in \FNN_{L,d_0,d}$ such that $f(x) = f_{\vec{\theta}}(x)$, for $x \in  \Rb^{1\times d_0}$.

\section{Extended related work}\label{app:extended_related_work}

Here, we discuss more related work.

\textbf{Expressivity and generalization abilities of MPNNs} The expressivity of an MPNN refers to its ability to express or approximate a broad class of functions over graphs. High expressivity means that the network can represent many such functions. In the literature, MPNN expressivity is typically analyzed from two complementary perspectives, i.e., algorithmic alignment with graph isomorphism tests~\citep{Mor+2022} and universal approximation theorems~\citep{Azi+2020,Boe+2023,geerts2022}. Works in the first line of research investigate whether an MPNN, with suitably chosen parameters, can distinguish the same pairs of non-isomorphic graphs as the $1$-dimensional Weisfeiler--Leman algorithm~\citep{Wei+1968} or its more powerful generalization~\citep{Cai+1992}. An MPNN is said to distinguish two non-isomorphic graphs if it computes different vector-valued representations for them. Foundational results by~\citet{Mor+2019} and~\citet{Xu+2018b} established that the expressive power of any MPNN is upper-bounded by that of $1$-dimensional Weisfeiler--Leman algorithm. The second line of research examines which functions over the domain of graphs can be approximated arbitrarily well by MPNNs~\citep{Azi+2020,Boe+2023,Che+2019,geerts2022,Mae+2019}. Compared to expressivity, there is comparatively little work on the generalization abilities of MPNNs, e.g., see~\citet{Gar+2020,Sca+2018,Vas+2025,Vas+2025b}. In addition, there is a growing body of work on the size-generalization abilities of MPNNs, e.g.,~\citep{Lev+2025,maskey23transferability,xu2021how,le23transfer,ruiz20,keriven+2022,Levie+2022,yehudai2021local}; see~\citet{Vas+2025c} for a detailed survey on MPNNs' generalization capabilities.

\textbf{(MP)NNs for CO} As mentioned above, there is a growing body of work on using MPNNs for CO, either for directly solving CO problems heuristically, e.g.~\cite{bello2016neural,Kar+2020,Kha+2017,Selsam2019LearningSAT} or by integrating them into exact solvers, e.g., in the context of integer-linear optimization~\citep{Sca+2024} or SAT solving~\citep{Sel+2019b,Toe+2025,wang24}; see~\citet{Cap+2021} for an overview. Recent advances include simulating primal-dual approximation~\citep{He+2025}, solving linear optimization problems~\citep{chen2023,Qia+2024}, self-supervised learning for CO problems~\citep{Kar+2025}, and provably learning the Bellman--Ford algorithm for shortest paths~\citep{Ner+2025}. Recently~\citet{Yau+2024} showed that MPNN can recover a semidefinite programming-based (optimal) approximation algorithm for maximum constraint-satisfaction problems, such as maximum cut, vertex cover, and maximum 3-SAT. Finally, there is a growing number of works on the capabilities of (graph) transformers~\citep{Mue+2023} to execute (graph) algorithms, e.g.,~\citet{Luc+2024,San+2024,San+2024b,Yeh+2025,guo25g1}. See~\citet{Cap+2021} for a survey.

\textbf{Approximation algorithms with predictions} A different approach that aims at connecting approximation algorithms and machine learning is \new{algorithms with predictions}~\citep{mitzenmacher2020algorithms}. In this framework, one seeks to design (online) algorithms that have access to a machine-learning-based prediction oracle that provides advice of unknown quality. If the advice is bad, the goal is to be as good as the best worst-case online algorithm (denoted \new{robustness}), and if the advice is good, the goal is to be as good as the advice (denote \new{consistency}).
For the online facility location problem \cite{M01}, several prediction-based algorithms are known \cite{MNS12,ACLPR21,JLLTZ22,FGGPT25}. One challenge with algorithms with prediction is often that it is unclear how the predictions will be learned.

\section{Missing proofs}

Here, we outline missing proofs from the main paper. 

\subsection{Approximation factor of \textsc{SimpleUniformFL}}

Here, we prove the $\cO(\log(n))$-approximation factor of \textsc{SimpleUniformFL}.

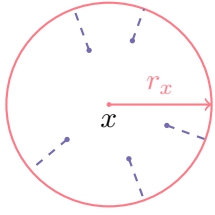
\begin{figure}
\begin{center}
\begin{tikzpicture}[scale=0.45]
   \definecolor{llred}{HTML}{FF7A87}
   \definecolor{lviolet}{HTML}{756BB1}
  \def\R{3} 

  \coordinate (O) at (0,0);

  \draw[llred, thick] (O) circle (\R);

  \draw[llred, ->, thick] (O) -- node[above] {$r_x$} (\R,0);
  \fill[llred] (O) circle (2pt);
  \node[below] at (O) {$x$};

  \foreach \ang/\rad in {110/1.7, 70/2.0, -140/1.6, -70/1.7, -20/1.8} {
    \coordinate (P) at (\ang:\rad);
    \fill[lviolet] (P) circle (2.2pt);
    \draw[lviolet, dashed, thick] (P) -- (\ang:\R);
  }
\end{tikzpicture}
\end{center}
\caption{Illustration of the radius $r_x$ of the element $x$ according to~\cref{eq:radius}, where the lengths of the dashed lines are summed up.\label{fig:radius}}
\end{figure}

\begin{lemma}
\label{app:lemma:nearbypoints}
Let $S \coloneqq (\cX,d)$ be a UniFL instance. Let $x \in \cX$ be arbitrary. Then there exists a set $T\subseteq \cX$ of size $k$ such that for every $y\in T$ we have $d(y,x) \le 2 r_x$ and $k  \ge 1/(2 r_y)$. 
\end{lemma}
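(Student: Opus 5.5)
The plan is a descent argument: walk from $x$ towards points with smaller and smaller radius until we reach a point $z$ whose ball $B(z,r_z)$ works as $T$. Two elementary facts about radii are used throughout.

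\emph{Fact 1 (size of the ball).} For every $z\in\cX$, \cref{eq:radius} gives
$1=\sum_{y\in B(z,r_z)}(r_z-d(y,z))\le |B(z,r_z)|\cdot r_z$, hence $|B(z,r_z)|\ge 1/r_z$.

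\emph{Fact 2 (a sufficient criterion).} Suppose every $y\in B(z,r_z)$ satisfies $r_y\ge r_z/2$. Set $T\coloneqq B(z,r_z)$ and $k\coloneqq|T|$. By Fact 1, $k\ge 1/r_z\ge 1/(2r_y)$ for every $y\in T$, which is the cardinality condition of the lemma.

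\emph{The descent.} Start with $z_0\coloneqq x$. Given $z_j$, if $z_j$ satisfies the hypothesis of Fact 2, stop and output $T=B(z_j,r_{z_j})$. Otherwise some $y\in B(z_j,r_{z_j})$ has $r_y<r_{z_j}/2$; set $z_{j+1}\coloneqq y$. Along the walk:
\begin{itemize}
\item $r_{z_{j+1}}<r_{z_j}/2$, so by induction $r_{z_j}\le r_x/2^j$;
\item $d(z_j,z_{j+1})\le r_{z_j}$, since $z_{j+1}\in B(z_j,r_{z_j})$.
\end{itemize}
Radii strictly decrease along the walk, so no point repeats. As $\cX$ is finite, the process stops at some $z_m$, and $z_m$ satisfies the hypothesis of Fact 2. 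By Fact 2, $T=B(z_m,r_{z_m})$ meets the cardinality requirement.

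\emph{The distance condition.} For $y\in T$, the triangle inequality gives
\begin{equation*}
d(y,x)\le \sum_{j=0}^{m-1} d(z_j,z_{j+1}) + d(z_m,y)\le \sum_{j=0}^{m} r_{z_j}\le r_x\sum_{j\ge 0}2^{-j} = 2r_x .
\end{equation*}
So every $y\in T$ satisfies $d(y,x)\le 2r_x$, as required.

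\emph{Main obstacle.} The hard part is realising that the naive choice $T=B(x,r_x)$ can fail. A point $y$ inside that ball may have a much smaller radius, because its ball $B(y,r_y)$ may contain many points outside $B(x,r_x)$. Comparing radii directly does not rescue this. The standard $1$-Lipschitz bound $|r_x-r_y|\le d(x,y)$ (from $B(x,r)\subseteq B(y,r+d(x,y))$) only gives $r_y\ge r_x-d(x,y)$, which can be $0$. The descent avoids this by moving to such a point whenever it exists. Halving the radius at each step makes the displacements form a geometric series, and that series is exactly what produces the constant $2$ in $d(y,x)\le 2r_x$ together with the factor $1/2$ in $k\ge 1/(2r_y)$.
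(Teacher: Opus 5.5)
Your proof is correct and is essentially the paper's argument: the paper also descends from $x$ through the balls $B(x^{(i-1)},r_{x^{(i-1)}})$, stops once the radius no longer halves, and takes that ball as $T$, with the same geometric series giving $d(y,x)\le 2r_x$. The only difference is cosmetic: the paper always moves to the minimum-radius point of the current ball (stopping when $r_{x^{(i)}}\ge r_{x^{(i-1)}}/2$), whereas you move to any point with radius below half, which changes nothing.
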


\begin{proof} Let $x\in \cX$ be arbitrary. Define $x^{(0)}=x$ and $x^{(1)} \in B(x^{(0)},r_{x^{(0)}})$ to be the point with minimum $r_{x^{(1)}}$. If $r_{x^{(1)}} \ge r_{x^{(0)}}/2$ then define $T=B(x,r_{x^{(0)}})$. We know that $k:=|B(x,r_{x^{(0)}})|\ge 1/r_{x^{(0)}} \ge 1/(2 r_{x^{(1)}})\ge 1/(2r_y)$, for every $y\in T$ which satisfies the second condition of the lemma while the first condition is clearly satisfied.

Otherwise, $r_{x^{(1)}}< r_{x^{(0)}}/2$. In this case, we
define
$x^{(2)} \in B(x^{(1)},r_{x^{(1)}})$ to be the point with minimum $r_{x^{(2)}}$ and, more generally, 
$x^{(i)} \in B(x^{(i-1)},r_{x^{(i-1)}})$ to be the point with minimum $r_{x^{(i)}}$.
As before, we proceed as follows. If $r_{x^{(i)}} \ge r_{x^{(i-1)}}/2$ then the recursion stops and we define $T=B(x_{(i-1)},r_{x^{(i-1)}})$. Otherwise, we recurse again.

Observe that the recursion eventually stops as the smallest possible radius is $1/n$. Let $x^{(r)}$ be the point that is considered when the recursion stops, and so $T=B(x_{(r-1)},r_{x^{(r-1)}})$. We know that $k \coloneqq |B(x^{(r-1)},r_{x^{(r-1)}})|\ge 1/r_{x^{(r-1)}} \ge 1/(2 r_{x^{(r)}}) \le 1/(2r_y)$ for every $y\in T$ which implies that every $y\in T$ satisfies the second condition of the lemma. 
Furthermore,
for every $y\in T$ we have
$d(x,y) \le d(x^{(k)},y) +\sum_{i=1}^k d(x^i, x^{i-1}) \le d(x^{(k)},y) + r_x \cdot \sum_{i=1}^k \frac{1}{2^{j-1}} \le
r_x/ 2^{k-1}
+
\sum_{i=1}^k \frac{1}{2^{j-1}} 
=
2 r_x$. Hence, the first condition is also satisfied, and the lemma follows.
\end{proof}

\begin{lemma}[\cref{thm:simple_logn} in the main paper]\label{app:thm:simple_logn}
Let $S =(\cX,d)$ be a UniFL instance, then the algorithm \textsc{SimpleUniformFL} with $c \ge 2$ computes a solution to the UniFL for $S$ with expected cost $\cO(\log(n)) \cdot \textsc{Opt}_S$, where $\textsc{Opt}_S$ denotes the cost of an optimal solution.
\end{lemma}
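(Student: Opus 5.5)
We bound the three cost components separately: the opening cost of $F_1$, the opening cost of $F_2$, and the connection cost. Each will be compared with $\sum_{x\in\cX} r_x$, which is $\Theta(\textsc{Opt}_S)$ by \cref{thm:radii}. Throughout, we use that $r_x\le 1$ for every $x$, since the point $x$ alone contributes $r_x$ to the sum in \cref{eq:radius}. We also use that the assignment rule of lines 5--6 forces every client's connection cost to be strictly less than $1$.

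\emph{Opening cost of $F_1$.} By linearity of expectation, $\mathbb{E}[|F_1|]\le\sum_x c\ln n\cdot r_x=\cO(\log n)\sum_x r_x=\cO(\log n)\,\textsc{Opt}_S$.

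\emph{Opening cost of $F_2$.} Fix $x$ and apply \cref{app:lemma:nearbypoints}. This gives a set $T$ of size $k$ with $d(x,y)\le 2r_x$ and $k\ge 1/(2r_y)$ for all $y\in T$. Hence each $y\in T$ is opened with probability at least $\min(1,c\ln n\cdot r_y)\ge\min\{1,c\ln n/(2k)\}$. If some $y\in T$ has $c\ln n\, r_y\ge1$, then a facility in $T$ is opened with certainty. Otherwise, by independence,
\[
\Pr[\text{no }y\in T\text{ open}]\le \prod_{y\in T}\mleft(1-\tfrac{c\ln n}{2k}\mright)\le e^{-c\ln n/2}\le 1/n
\]
for $c\ge2$.

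This only yields a facility within distance $2r_x$, whereas line 5 requires one at distance strictly less than $1$. When $r_x<1/2$ this suffices. When $r_x\ge 1/2$, we simply charge the opening at $x$ to $r_x$: since $1\le 2r_x$, we have $\Pr[x\in F_2]\le 2r_x$. In the case $r_x<1/2$, we have $\Pr[x\in F_2]\le 1/n$, so the total contribution of such points is at most $1\le\textsc{Opt}_S$, since any solution opens at least one facility. Altogether, $\mathbb{E}[|F_2|]\le 2\sum_x r_x+1=\cO(\textsc{Opt}_S)$.

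\emph{Connection cost.} Only points assigned to some $f\in F_1$ pay a connection cost, and each such cost is less than $1$. We bound the cost of $x$ by $2r_x$ whenever $T$ contains an open facility, and by $1$ otherwise. The first case contributes $\cO(\sum_x r_x)$. The second case occurs with probability at most $1/n$ when $r_x<1/2$, and otherwise $1\le 2r_x$. The total connection cost is therefore $\cO(\textsc{Opt}_S)$. Summing the three parts gives expected cost $\cO(\log n)\cdot\textsc{Opt}_S$, as claimed.

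\emph{Main obstacle.} The delicate step is the $F_2$ bound, specifically using \cref{app:lemma:nearbypoints} correctly. We need that $T$ is large relative to $1/r_y$ for each $y\in T$, so that the product of non-opening probabilities is $n^{-\Omega(c)}$. We also need the case split on $r_x\ge 1/2$ to reconcile the $2r_x$ neighbourhood with the distance-$1$ threshold of line 5. If the constant in the lemma's distance bound turns out to differ, the same argument goes through with the threshold $r_x<1/2$ replaced by a smaller constant fraction.
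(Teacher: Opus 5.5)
Your proof is correct and follows the paper's argument. You bound $\mathbb{E}|F_1|$ by linearity together with $\sum_x r_x=\Theta(\textsc{Opt}_S)$, then use the nearby-points lemma to show that, with probability at least $1-n^{-c/2}$, some facility opens within distance $2r_x$ of each $x$. The only difference is cosmetic: the paper bounds the combined quantity $Z_x=\min(1,d(x,F_1))$, which covers both the connection cost and the unit cost of opening $x$ in $F_2$, by $2r_x+n^{-c/2}$, and this makes your separate case split on $r_x\ge 1/2$ unnecessary.
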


\begin{proof}

For $x\in \cX$ let $Y_x$ denotes the indicator random variable for the
event that $x\in F_1$.
Furthermore, let $Z_x = \min(1,d(x,F))$ denote the random variable for the connection cost, i.e., the distance to the closest open facility (where we assume that this distance is at most $1$, since otherwise we open a facility in $F_2$).
We first show that by~\cref{thm:radii}
\begin{equation*}
\mathbb{E}\mleft(\sum_{x\in \cX} Y_x \mright) = \sum_{x\in \cX} \mathbb{E}(Y_x)
\le c\cdot \log n \cdot \sum_{x\in \cX} r_x \in \cO(\log n) \cdot \textsc{Opt}_S. 
\end{equation*}

In order to estimate $Z_x$ we will first argue that with good probability, there is an open facility 
within distance $2r_x$, and we pay $1$ to open a facility in $x$, if there is no such nearby facility. By \cref{app:lemma:nearbypoints} there is a set $T$ of $k$ points such that for every $y\in T$ we have $d(x,y) \le 2r_x$ and radius $r_y\ge 1/(2k)$.
The probability that none of these points has been opened at most 
$
(1-\min(1,\frac{c \ln n}{2k}))^k
$
If $k < \frac{c \ln n}{2}$ this probability is $0$, so we assume in the following that $k \ge \frac{c \ln n}{2}$. In this case, we get 
$
(1-\min(1,\frac{c \ln n}{2k}))^k
=
(1-\frac{c \ln n}{2k})^k
\le e^{- k \frac{c \ln n}{2k}}
= e^{- \frac{c}{2} \cdot \ln n} = \frac{1}{n^{c/2}}.
$
Thus,
\begin{equation*}
\mathbb{E} \mleft(\sum_{x\in \cX} Z_x\mright) =
\sum_{x\in \cX}
\mathbb{E}(Z_x)
\le \sum_{x\in \cX} \mleft(2 r_x + \frac{1}{n^{c/2}}\mright)
\in \cO(\textsc{Opt}_S).
\end{equation*}
The lemma follows from the fact that the expected facility location cost is at most the sum of the expected opening cost and the connection cost.
\end{proof}

\subsection{Approximation capabilities of the MPNN architecture}

Here, we prove the $\cO(\log(n))$-approximation factor of the MPNN architecture.

\begin{proposition}[\cref{thm:simulation} in the main paper]\label{app:thm:simulation}
For every $n \in \Nb$, there exists a $k \in \Nb$, a discretization
\begin{equation*}
0 = a_0 < a_1 < \dots < a_k = 1, 
 \end{equation*}
and an MPNN architecture $\textsf{mpnn}_{\vec{\theta}} \colon \cV(\cG_n) \to [0,1]$ parameterized by $\vec{\theta}$, following~\cref{simple_mpnn}, and a parameter assignment $\vec{\theta}^*$ such that for every UniFL instance $S \coloneqq (\cX,d)$ with $|\cX| = n$, the MPNN $\textsf{mpnn}_{\vec{\theta}^*}$ outputs vertex-wise probabilities
\begin{equation*}
p_v \coloneqq  \textsf{mpnn}_{\vec{\theta}^*}(G_S, v)  \in [0,1], \quad \text{ for } v \in V(G_S),
\end{equation*}
such that
\begin{equation*}
\mathbb{E}\big[\mathrm{cost}(\cX,p)\big] \le \cO( \log(n)) \cdot \mathrm{Opt}_S,
\end{equation*}
where $\mathrm{cost}(X,p)$ denotes the facility-opening and connection cost obtained by applying the standard SimpleUniformFL post-processing (line 4--6 in \textsc{SimpleUniformFL}) using the probabilities $\{p_x\}_{x \in X}$. \emph{In particular, the $\textsf{mpnn}_{\vec{\theta}^*}$ realizes an $\cO(\log(n))$-approximation algorithm for the UniFL on all instances of order $n$.}
\end{proposition}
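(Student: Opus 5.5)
The plan is to show that the MPNN of \cref{simple_mpnn}, with a suitable discretization and hand-set weights, computes probabilities $p_x$ that dominate a constant fraction of the \textsc{SimpleUniformFL} probabilities while never exceeding a constant multiple of them. Once that is established, we rerun the proof of \cref{app:thm:simple_logn} with slightly worse constants.

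\textbf{Step 1: discretization and weights.} Fix $n$ and use a geometric grid $a_i = 2^{i-k}$ for $i\ge 1$, where $k=\lceil \log_2 n\rceil +1$, so that $a_1 \le 1/(2n)$. Recall that $r_x \ge 1/n$, since $|B(x,r_x)|\le n$ forces $n r_x \ge 1$. Set the weights of $\FNN_{1,3}$ so that it outputs $\relu(a_i-d(x,y))$; self-loops (or adding the term with $d(x,x)=0$) make the sum run over $B(x,a_i)$. Then $\FNN_{2,3}$ realizes $t_x^{(i)}=\min\{1,\phi_x(a_i)\}$, where
\[
\phi_x(a)=\sum_{y} \relu(a-d(x,y)).
\]
Set $t_x^{(0)}=0$, as the formula gives for $a_0=0$.

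\textbf{Step 2: the radius estimate.} The map $\phi_x$ is continuous, nondecreasing, and strictly increasing once positive, with $\phi_x(r_x)=1$. Therefore $t_x^{(i)}=1$ exactly when $a_i\ge r_x$, and $t_x^{(i)}<1$ otherwise. The telescoping sum $\hat r_x=\sum_i a_i(t^{(i-1)}_x-t^{(i)}_x)$ is not literally what we want, since the signs are reversed. I would instead choose the $\FNN_{2,3}$ in \cref{eq:hatrx} so that it outputs
\[
a_i\cdot\relu\bigl(t_x^{(i)}-t_x^{(i-1)}\bigr),
\]
treating $t^{(i)}$ as rounded. The \textbf{main obstacle} is that $t_x^{(i)}$ is fractional below the radius, so the sum does not isolate a single bin. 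I would handle this by using piecewise-linear sharpening of the form $\min\{1,M\,\relu(t-1+\delta)\}$, or more simply by using the bound directly: the weighted sum $\sum_i a_i(t^{(i)}-t^{(i-1)})$ equals $\sum_i (a_i-a_{i+1})t^{(i)} + a_k t^{(k)}$ by summation by parts. Since $t^{(i)}\le 1$ and vanishes well below $r_x$, this is at most $a_k$; it is also at least the contribution of the first index $i^*$ with $a_{i^*}\ge r_x$. The cleanest route is the sharpened version, which yields an exact indicator $\mathbbm{1}[a_i\ge r_x]$ for every instance. The difficulty is that the sharpening step is not exact when $\phi_x(a_i)$ is arbitrarily close to $1$.

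To avoid that issue, I would use the exact-threshold form $\min\{1,\phi\}$ and then take the last fractional bin into account. The estimate is then $\hat r_x=a_{i^*}$ up to a factor of $2$, where $i^*$ is the smallest index with $a_{i^*}\ge r_x$. This gives $r_x\le \hat r_x\le 2r_x$, and in any case an estimate within the range $[r_x/2,\,2r_x]$, which is all that is needed. Clipping at $a_k=1$ is harmless because $r_x\le 1$ holds w.l.o.g.

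\textbf{Step 3: the output and the cost bound.} Set $p_x=\min\{1,c\ln n\cdot \hat r_x\}$, which $\FNN_{2,3}$ can realize because $n$ is fixed. For the opening cost, \cref{thm:radii} gives
\[
\sum_x p_x\le 2c\ln n\sum_x r_x=\cO(\log n)\,\textsc{Opt}_S.
\]
For the connection cost, I would repeat the argument of \cref{app:thm:simple_logn}, which relies on \cref{app:lemma:nearbypoints}. For the set $T$ of $k$ points within distance $2r_x$, each $y\in T$ satisfies $p_y\ge \min\{1, c\ln n/(4k)\}$. Taking $c\ge 4$, the probability that no point of $T$ opens is at most $n^{-c/4}\le 1/n$. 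Hence $\mathbb{E}[Z_x]\le 2r_x+1/n$. Summing over $x$ gives a bound of $\cO(\textsc{Opt}_S)$, because $\textsc{Opt}_S\ge 1$. Together with the $F_2$ opening costs, which are dominated by the same terms, this yields $\mathbb{E}[\mathrm{cost}]=\cO(\log n)\,\textsc{Opt}_S$.
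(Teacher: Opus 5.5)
Your plan is the right shape: replace an additive radius estimate by a two-sided multiplicative one, $\hat r_x=\Theta(r_x)$, then rerun the proof of \cref{app:thm:simple_logn}. But the step it hinges on, the \emph{lower} bound on $\hat r_x$, is never established, and the version you state is false. You conclude $r_x\le\hat r_x\le 2r_x$. Take a point $x$ at distance more than $1$ from all other points, so only the self-loop contributes: $\phi_x(a)=a$, $r_x=1$ and $t_x^{(i)}=a_i$. On your doubling grid, $\hat r_x=\sum_{i=1}^k a_i(a_i-a_{i-1})=\tfrac{2}{3}+\tfrac{1}{3}\cdot 4^{1-k}<r_x$. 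The only lower bound you actually derive is the top-bin contribution $a_{i^*}(1-t_x^{(i^*-1)})$, and it tends to $0$ when $r_x$ lies just above a grid point. ``Taking the last fractional bin into account'' does not control the weight that the fractional values $t_x^{(i)}$, $i<i^*$, place on all the lower bins. Using only $t_x^{(i)}\le 1$, your summation by parts gives nothing better than $\hat r_x\ge a_1$, which is of order $1/n$. Yet Step~3 needs exactly this lower bound to get $p_y\ge\min\{1,c\ln n/(4k)\}$ on the set $T$ of \cref{app:lemma:nearbypoints}, so as written the connection-cost bound is unsupported.

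The missing idea is convexity. $\phi_x$ is a sum of the convex functions $a\mapsto\relu(a-d(x,y))$, with $\phi_x(0)=0$ and $\phi_x(r_x)=1$, so it lies below its chord on $[0,r_x]$. Hence $t_x^{(i)}=\phi_x(a_i)\le a_i/r_x$ for all $i<i^*$. Plugging this into your summation by parts and using $a_{i+1}-a_i=a_i$ on the doubling grid gives
\begin{equation*}
\hat r_x=a_{i^*}-\sum_{i=1}^{i^*-1}(a_{i+1}-a_i)\,t_x^{(i)}\ge a_{i^*}-\frac{1}{r_x}\sum_{i=1}^{i^*-1}a_i^2\ge a_{i^*}-\frac{a_{i^*}^2}{3r_x}\ge\frac{2}{3}\,r_x,
\end{equation*}
because $a_{i^*}/r_x\in[1,2)$ and $v-v^2/3\ge 2/3$ on that interval. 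With $\tfrac{2}{3}r_x\le\hat r_x\le a_{i^*}<2r_x$, your Step~3 goes through unchanged. The same chord bound would also make your sharpening variant work, since $\phi_x(a_i)\ge 1-\delta$ forces $a_i\ge(1-\delta)r_x$.

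Do not patch this by reverting to the paper's route. The paper asserts that $\hat r_x$ falls into the bin containing $r_x$, so $|\hat r_x-r_x|\le\Delta$. It then uses continuity of the expected-cost polynomial $\Phi(p)$ from \cref{eq:unsupervise_loss}, together with $\mathrm{Opt}_S\ge 1$, to absorb an additive error. That additive claim suffers from the same fractional-bin effect (and sign issue) you spotted. In the isolated-point example on a fine uniform grid, the correctly signed $\hat r_x$ tends to $1/2$, not to $r_x=1$. Your multiplicative route, completed by the chord bound, is the one that actually proves the proposition, at the cost of only a worse constant.
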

\begin{proof}
We verify the result in three steps: (i) show that the MPNN can approximate the radii, (ii) show that the MPNN can compute the opening probabilities, and (iii) show that we can recover the approximation ratio up to a constant, which we outline below. Let $(\cX,d)$ be a finite metric space with $n \coloneqq |\cX|$.

\textbf{\normalfont \emph{The MPNN approximates the radii}} Recall that the radius $r_x \in (0,1]$, for $x \in \cX$, defined as the unique solution of
\begin{equation}\label{eq:radius-def}
\sum_{y \in X \cap B(x,r_x)} \bigl(r_x - d(x,y)\bigr) = 1.
\end{equation}
By construction, the function
\begin{equation*}
\varphi_x(r) \coloneqq \sum_{y \in X \cap B(x,r)} (r - d(x,y))_+,
\qquad (t)_+ \coloneqq \max\{0,t\},
\end{equation*}
is continuous and strictly increasing in $r \in (0,1]$, and $r_x$ is the unique point with $\varphi_x(r_x) = 1$.

Now, for each bin boundary $a_i$, we define the
\begin{equation*}
t_x^{(i)}
\coloneqq \min \mleft\{1, \sum_{y \in N(x)} (a_i - d(x,y))_+\mright\}
= 1 - \mleft(1 - \textstyle\sum_{y \in N(x)} (a_i - d(x,y))_+\mright)_+.
\end{equation*}
By construction, $t_x^{(i)}$ depends only on $x$'s neighbors in $G_S$, and both the inner map $(a_i,d(x,y)) \mapsto (a_i - d(x,y))_+$ and the outer map $z \mapsto \min\{1,z\}$ can be implemented by constant-depth, constant-width FNNs with ReLU activation, which can be verified readily, following the construction in~\cref{simple_mpnn}. Hence, an MPNN with sum aggregation and two FNN layers can compute all values $\{t_x^{(i)}\}_{i \in [k]}$ in parallel. 

Consider~\cref{eq:hatrx}, because $\varphi_x(\cdot)$ is strictly increasing and crosses the threshold value $1$ exactly once, the sequence $\{t_x^{(i)}\}_{i=0}^k$ changes from values strictly below $1$ to values equal to $1$ at (or very close to) the true radius $r_x$. With the given discretization, this implies that there exists an index $j \in [k]$ such that
\begin{equation*}
a_{j-1} \le r_x \le a_j
\quad\text{and}\quad
\hat r_x \in [a_{j-1}, a_j].
\end{equation*}
Now, let, 
\begin{equation*}
    \Delta \coloneqq \max_{i \in [k]} (a_i - a_{i-1}).
\end{equation*}
Hence,
\begin{equation*}
|\hat r_x - r_x| \le \Delta.
\end{equation*}
Consequently, overall, 
\begin{equation*}\label{eq:sum-rhat}
\mleft| \sum_{x \in X} \hat r_x - \sum_{x \in X} r_x \mright| \leq  n\Delta.
\end{equation*}
Hence, by choosing a discretization such that $\Delta$ is sufficiently small, we can make the absolute difference in~\cref{eq:sum-rhat} arbitrarily small. 

\textbf{\normalfont \emph{Computing opening probabilities from radii}} The \textsc{SimpleUniformFL} algorithm in~\cref{sec:simpleu} opens a facility at $x$ in line~2 with probability
\begin{equation*}
p_x^\star \coloneqq \min\{1, c \log(n)\, r_x\}
\end{equation*}
for a suitable constant $c > 0$. Our MPNN uses the approximate quantities $\hat r_x$ and computes
\begin{equation*}
p_x \coloneqq \min\{1, c \log(n)\, \hat r_x\}
= 1 - \mleft(1 - c \log(n)\, \hat r_x \mright)_+.
\end{equation*}
Again, the mapping $(\log(n), \hat r_x) \mapsto p_x$ is realized by a constant-depth, constant-width FNN with ReLU activation, applied point-wise to the vertex features produced by the above-constructed MPNN, following~\cref{simple_mpnn}. Hence, overall, we have constructed the desired MPNN $\textsf{mpnn}_{\vec{\theta}^*}$, with parameter assignment $\vec{\theta}^*$, such that 
\begin{equation}\label{eq:px-diff}
|p_x - p_x^\star|
\le
c \log(n) |\hat r_x - r_x|
\le
c \log(n)  \Delta,
\quad\text{ for all } x \in \cX.
\end{equation}
Again, by choosing a discretization such that $\Delta$ is sufficiently small, we can arbitrarily closely approximate $p_x^\star$, for $x \in \cX$.

\textbf{\normalfont\emph{Approximation guarantee}} Recall from~\cref{sec:simpleu} that if we denote by $F$ the random set of facilities opened in line~2 of \textsc{SimpleUniformFL}, and by $A_f$ and $B_f$ the indicator variables for the events \say{$f \in F$} and \say{no facility within distance $< 1$ of $f$ is open after line~2,} then the expected opening cost with probabilities $\{p_x\}$ can be written as
\begin{equation*}
\mathbb{E}\bigl[\text{open}(X,p)\bigr]
=
\sum_{f \in X} p_f
+\sum_{f \in X} \prod_{x \in X \coloneqq d(x,f) < 1} (1 - p_x),
\end{equation*}
and the expected connection cost admits the expression
\begin{equation*}
\mathbb{E}\bigl[\text{conn}(X,p)\bigr]
=
\sum_{x \in X}\sum_{f \in X} d(x,f)\,p_f \prod_{z \in X : d(x,z) < d(x,f)} (1 - p_z),
\end{equation*}
so that the total expected cost is
\begin{equation*}
\mathbb{E}\bigl[\mathrm{cost}(X,p)\bigr]
=
\mathbb{E}\bigl[\text{open}(X,p)\bigr]
+
\mathbb{E}\bigl[\text{conn}(X,p)\bigr].
\end{equation*}
Now let
\begin{equation*}
\Phi(p) \coloneqq \mathbb{E}\bigl[\mathrm{cost}(X,p)\bigr].
\end{equation*}
By~\cref{eq:px-diff}, for every $\varepsilon > 0$, there exists $\Delta_{\varepsilon}$ such 
\begin{equation}\label{eq:Phi-diff}
|\Phi(p) - \Phi(p^\star)| \le \varepsilon.
\end{equation}
On the other hand,~\cref{thm:radii} shows that
\begin{equation*}
\sum_{x \in X} r_x \in \Theta(\mathrm{Opt}_S),
\end{equation*}
and~\cref{thm:simple_logn} applied to \textsc{SimpleUniformFL} with the ideal probabilities $\{p_x^\star\}$ yields
\begin{equation*}
\mathbb{E}\bigl[\mathrm{cost}(X,p^\star)\bigr]
\le
C \log(n) \cdot \mathrm{Opt}_S,
\end{equation*}
for some $C >0$. Hence, using \cref{eq:Phi-diff} and the fact that we can absorb the additive term $\varepsilon$ into the multiplicative constant in front of $\log(n) \cdot \mathrm{Opt}_S$. That is, there exists a universal constant $C_{\varepsilon} > 0$ such that
\begin{equation*}
\mathbb{E}\bigl[\mathrm{cost}(X,p)\bigr] \le C_{\varepsilon} \log(n \cdot \mathrm{Opt}_S.
\end{equation*}
This shows that $M_\theta$ realizes an $O(\log(n))$-approximation algorithm for UniFL on all instances of order $n$, completing the proof.
\end{proof}

\begin{lemma}[\cref{thm:lowerbound} in the main paper]\label{app:thm:lowerbound}
    Let \textsf{mpnn} be an arbitrary (deterministic) MPNN that
\begin{enumerate}
\item  receives as input an edge-weighted graph $G_\mathcal S$ that encodes an instance $\mathcal S=(X,d)$  of the uniform facility location problem, and 
\item computes opening probabilities $p_v$ for every vertex $v\in V$.
\end{enumerate}
 Then there is $n_0$ such that for every $n\ge n_0$ there exists an
    instance finite metric space $S \coloneqq (\cX,d), with |\cX|=n$,
    such that
    \begin{equation*}
    \mathbb{E}\left[|F| + \sum_{p\in X} \min
    \{1,\dist(p,F)\}\right] \ge \frac{\ln(n)}{2} \cdot \mathrm{Opt}_S.
    \end{equation*}
\end{lemma}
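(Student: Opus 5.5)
The plan is an indistinguishability (symmetry) argument on a single highly symmetric instance. We use the fact that an MPNN is permutation-equivariant, and that every vertex of a vertex-transitive, uniformly weighted graph has the same unrolled computation tree. Hence it must assign the same opening probability $p$ to all vertices. We then show that no common value $p$ beats $\tfrac{\ln n}{2}$ times the optimum. Note that this does not use constant depth at all; it only uses determinism and equivariance of \textsf{mpnn}.

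\textbf{Construction.} For $n \ge n_0$, let $\cX$ consist of $n$ points with $d(x,y)=\delta$ for all $x \neq y$, where $\delta \coloneqq 1/n^2$. This is a metric, being a scaled uniform metric. The encoding graph $G_S$ is the complete graph $K_n$ with all edge weights $\delta$ and no distinguishing vertex features. By induction on the layers of \cref{def:MPNN_aggregation}, every vertex receives the same feature after every layer: all vertices start equal and aggregate identical multisets of (feature, feature, weight) triples. Hence $p_v = p$ for all $v$, for some $p\in[0,1]$ depending only on $n$. The optimum opens a single facility, so $\mathrm{Opt}_S \le 1+(n-1)\delta \le 1+1/n$.

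\textbf{Cost lower bound.} With independent openings, the expected cost splits into two terms. First, $\mathbb{E}|F| = np$. Second, whenever $F=\emptyset$, every point pays $\min\{1,\dist(\cdot,F)\}=1$, which contributes at least $n(1-p)^n$. Thus
\begin{equation*}
\mathbb{E}\Bigl[|F| + \sum_{x\in\cX}\min\{1,\dist(x,F)\}\Bigr] \ \ge\ np + n(1-p)^n .
\end{equation*}
We split into two cases with threshold $\tau \coloneqq 0.6\ln n$.
\begin{itemize}
\item If $np \ge \tau$, the cost is at least $0.6\ln n$. For $n\ge n_0$ this exceeds $\tfrac{\ln n}{2}(1+1/n)\ge \tfrac{\ln n}{2}\mathrm{Opt}_S$.
\item If $np<\tau$, then $p<0.6\ln n/n\le 1/2$, so $1-p\ge e^{-p-p^2}$. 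This gives $n(1-p)^n \ge n\,e^{-np(1+p)} \ge n^{1-0.6(1+o(1))}$. That is polynomially large, hence again at least $\tfrac{\ln n}{2}\mathrm{Opt}_S$ for $n\ge n_0$.
\end{itemize}
The elementary inequality $1-p\ge e^{-p-p^2}$ for $p\le 1/2$ is the only calculation needed.

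\textbf{Main obstacle.} The only delicate point is making the symmetry claim rigorous for an arbitrary MPNN. We must check that the model has no access to vertex identifiers or random features; this is where \emph{deterministic} and the input format $(G_S,w)$ with no vertex attributes are used. Then the layerwise induction shows all vertex embeddings, and hence all $p_v$, coincide. If the model may take constant initial features (or the value $n$) as input, these are identical across vertices and the induction is unchanged.

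The choice $\delta=1/n^2$ serves only to make $\mathrm{Opt}_S=1+o(1)$. Any $\delta\le 1/(n\ln n)$ would do.
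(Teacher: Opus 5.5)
Your proposal is correct and follows essentially the same route as the paper: the equidistant instance with tiny pairwise distance, the symmetry argument forcing a common $p$, and the lower bound $np + n(1-p)^n$ compared against $\mathrm{Opt}_S \approx 1$. Your case split at $np = 0.6\ln n$, using $1-p \ge e^{-p-p^2}$ for $p \le 1/2$, is the right way to finish; the paper's one-line chain instead asserts $(1-p)^n \ge e^{-pn}$, which is backwards since $1-p \le e^{-p}$, and is only repaired by a case distinction like yours.
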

\begin{proof}
We consider an $n$-point metric space $\cX$ in which every pair of points from $\cX$ has pairwise distance $\varepsilon>0$ for an arbitrary small $\varepsilon > 0$. An optimal solution (for sufficiently small $\varepsilon>0$) opens $1$ facility and has cost $1+\varepsilon n$. Since the graph $G_{S}$ is symmetric and the MPNN \textsf{mpnn} is deterministic, it holds that there is a value $p$ such that $p_v=p$, for $v\in V(G_S)$. Then we get
\begin{equation*}
    \mathbb{E}\left[|F| + \sum_{p\in X} \min
    \{1,\dist(p,F)\}\right]
    \ge p n + (1-p)^n n 
    \ge pn + \mleft( \frac{1}{e}\mright)^{pn} n 
    \ge \frac{\ln n}{2} \cdot \mathrm{Opt}_S.
    \end{equation*}
    
\end{proof}

\subsection{Analysis of the recursive algorithm}

Here, we prove the $\cO(1)$-approximation factor of \textsc{UniformFLRecursionStart}.

\begin{lemma}
\label{app:lemma:Recursive}
Let $(\cX,d)$ be a UniFL instance. If we run the algorithm 
{\textsc{UniformFLRecursionStart}} on $(\cX,d)$ then for each recursive call of
\textsc{RecursiveUniformFL} with parameters $\cX',F$ and $d$ we have for every $x\in \cX'$
that
$
B_{\cX}(x,r_x) \subseteq \cX'
$
or 
$d(x,F) \le 13 r_x$, where the radius $r_x$ is defined with respect to set $\cX$.
\end{lemma}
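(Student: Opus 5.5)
The plan is to prove the statement by induction over the sequence of calls of \textsc{RecursiveUniformFL} made by \textsc{UniformFLRecursionStart}, using two monotonicity facts. First, the set $F$ of open facilities only grows between calls. Second, the point set $\cX'$ only shrinks. So once $d(x,F)\le 13 r_x$ holds for a point $x$, it keeps holding in every later call in which $x$ is still present. It therefore suffices to consider the first call in which $B_{\cX}(x,r_x)\not\subseteq \cX'$.

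The base case is the first call, where $\cX'=\cX$, so $B_{\cX}(x,r_x)\subseteq\cX'$ trivially. For the inductive step, fix $x$ still present in the current call and suppose its ball is not contained in the current set. Then some $y\in B_{\cX}(x,r_x)$ was removed in an earlier call with point set $\cX''$. By the previous paragraph we may take this to be a call in which $B_{\cX}(x,r_x)\subseteq \cX''$ still held; otherwise $d(x,F)\le 13r_x$ was already established and we are done. Removal of $y$ means that, in line~7, $y$ was assigned to a facility $f$ with $d(y,f)\le 6 r''_y$. Here $r''_y$ is the radius of $y$ computed in that call, i.e.\ with respect to $\cX''$. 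The triangle inequality then gives $d(x,F)\le d(x,y)+d(y,f)\le r_x+6r''_y$, so the whole argument reduces to showing $r''_y\le 2r_x$.

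For this I will use two auxiliary facts about radii.
\begin{enumerate}
\item If $B_{\cX}(x,r_x)\subseteq \cX''$, then the radius of $x$ with respect to $\cX''$ equals $r_x$. Only points inside the ball contribute to the defining sum~\cref{eq:radius}, so the sum evaluated at $r_x$ is unchanged and still equals $1$.
\item Radii are $1$-Lipschitz within any fixed point set: $r''_y\le r''_x+d(x,y)$. Put $\rho=r''_x+d(x,y)$. Every $z$ with $d(x,z)\le r''_x$ satisfies $\rho-d(y,z)\ge r''_x-d(x,z)\ge 0$, by the triangle inequality. Hence the defining sum for $y$ evaluated at $\rho$ is at least $1$. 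Since that sum is increasing in the radius, this yields $r''_y\le\rho$.
\end{enumerate}
Combining the two facts gives $r''_y\le r_x+d(x,y)\le 2r_x$, and hence $d(x,F)\le r_x+12r_x=13r_x$.

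I expect the main subtlety to be the bookkeeping of which point set each radius refers to. The algorithm recomputes radii on the shrinking set, while the lemma measures $r_x$ with respect to $\cX$. Fact~1 is what bridges the two, and it applies only while the ball of $x$ is still fully present. This is exactly why the induction must look at the call in which $y$ was removed, rather than at the current call, and why that call must be chosen as one in which the ball of $x$ was still intact.
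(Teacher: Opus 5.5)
Your proposal is correct and follows essentially the same route as the paper: at the call in which the first point $y$ of $B_{\cX}(x,r_x)$ is assigned, the ball is still intact, so $r_x$ is unchanged; comparing the defining sums gives $r_y\le 2r_x$; and the bound $d(x,F)\le r_x+6r_y\le 13r_x$ then persists in all later calls because $F$ only grows. Your 1-Lipschitz formulation $r''_y\le r''_x+d(x,y)$ is a cleaner rendering of the paper's contradiction argument. As written, the paper's displayed chain starts with a sum over $B_{\cX'}(x,r_x)$, when it should pass through the containment $B_{\cX'}(x,r_x)\subseteq B_{\cX'}(y,r_y)$.
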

\begin{proof}
For $x\in \cX$ let $y$ be the first point from $B_{\cX}(x,r_x)$ that is assigned to a facility. Consider the corresponding call of 
\textsc{RecursiveUniform}. Let $\cX' \subseteq \cX$ be the set of points passed to this call.
Since $y$ is the first point from $B_{\cX}(x,r_x)$ that is assigned to a facility we know that 
$B_{\cX}(x,r_x) \subseteq \cX'$. Hence, the value of $r_x$ remains the same for $\cX$ and $\cX'$.
Now consider the current $r_y$ for $y\in B_{\cX'}(x,r_x) = B_{\cX}(x,r_x)$ that are defined wrt. $\cX'$. We claim that $r_y\le 2 r_x$ as otherwise
\begin{equation*}
1=\sum_{p\in B_{\cX'}(x,r_x)} r_y-d(p,y)> \sum_{p\in B_{\cX'}(x,r_x)} 2 r_x - (d(p,x)+d(x,y)) \ge \sum_{p\in B_{\cX'}(x,r_x)} r_x - d(p,x) = 1
.
\end{equation*}
At the point in time $y$ gets assigned, there is a facility within distance $6r_y$
from $y$ and so there is a facility within distance $13 r_x$ from $x$ as $d(x,y) \le r_x$. During further recursive calls, $x$ will therefore be assigned to a facility within distance at most $13r_x$. 
\end{proof}

\begin{corollary}
Let $(\cX,d)$ be a UniFL instance. If we run the algorithm 
{\textsc{UniformFLRecursionStart}} on $(\cX,d)$ then
\begin{equation*}
\sum_{x\in \cX} d(x,F) \in O(\textsc{Opt}),
\end{equation*}
where $\textsc{Opt}$ denotes the objective value of an optimal solution.
\end{corollary}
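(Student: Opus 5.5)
The plan is to bound $\sum_{x\in\cX} d(x,F)$ point by point, using \cref{app:lemma:Recursive}, and then to sum the resulting radius bounds with \cref{thm:radii}. The corollary implicitly assumes that \textsc{UniformFLRecursionStart} terminates with every point assigned; I take that as given here and address termination (in expectation) separately. Each $x\in\cX$ is therefore assigned in some call of \textsc{RecursiveUniformFL}. Let $\cX'$ be the point set and $F'$ the facility set passed to that call, so that $x\in \cX'$. I aim to show
\[
d(x,F)\le C\, r_x
\]
for an absolute constant $C$, where $r_x$ is the radius with respect to the original set $\cX$. Since $F$ only grows across calls, it suffices to bound the distance from $x$ to the facility it is assigned to.

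\cref{app:lemma:Recursive} gives two cases. In the first case, $d(x,F')\le 13 r_x$ already holds for the inherited set $F'\subseteq F$, and we are done with $C=13$. In the second case, $B_\cX(x,r_x)\subseteq \cX'$. Then the radius of $x$ with respect to $\cX'$ equals $r_x$, because the defining sum in \cref{eq:radius} only involves points of $B(x,r_x)$ and all of these survive in $\cX'$. The assignment rule in line~7 of \textsc{RecursiveUniformFL} assigns $x$ only if some facility lies within $6r_x'$, where $r_x'$ is the current radius; hence $d(x,F)\le 6 r_x' = 6r_x$. One point needs care: radii computed on subsets can only grow, since removing points decreases $\varphi_x$. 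So I must use $B_\cX(x,r_x)\subseteq\cX'$ to conclude $r_x' = r_x$ rather than just $r_x'\ge r_x$. In both cases I obtain $d(x,F)\le 13 r_x$.

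Summing over all points gives $\sum_x d(x,F)\le 13\sum_x r_x$. \cref{thm:radii} bounds the right-hand side by $O(\textsc{Opt})$, which proves the corollary.

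The main obstacle is the case analysis in the second step, in particular the direction of the radius monotonicity. I must make sure that the moment $x$ is actually assigned falls into one of the two cases of \cref{app:lemma:Recursive}, and that the "$6r$" bound is applied with a radius that does not exceed the original $r_x$. The second alternative of \cref{app:lemma:Recursive} is exactly what guarantees this, so I would spell out why $r'_x=r_x$ under $B_\cX(x,r_x)\subseteq\cX'$, namely that $\varphi_x$ restricted to $r\le r_x$ is unchanged. I would also note that the bound $d(x,F)\le 13r_x$ can be replaced by $\min\{1,\cdot\}$ if needed for consistency with the cost \cref{eq:objective}.
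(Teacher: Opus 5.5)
Your proposal is correct and follows the paper's route. The paper's proof is the one-liner ``follows from \cref{app:lemma:Recursive} and \cref{thm:radii},'' i.e., a per-point bound $d(x,F)\le 13 r_x$ summed via $\sum_x r_x \in \Theta(\textsc{Opt})$. Your case split at the call where $x$ is assigned spells out the step the paper leaves implicit at the end of the lemma's proof: you use $B_\cX(x,r_x)\subseteq\cX'$ to get $r'_x=r_x$ and hence the $6r_x$ bound. Like the paper, your argument assumes the recursion terminates.
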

\begin{proof}
Follows from \cref{app:lemma:Recursive} and \cref{thm:radii}.
\end{proof}

We now continue to analyze the opening cost, i.e., $|F|$. We will use \cref{app:lemma:nearbypoints}

\begin{lemma}
\label{app:lemma:Recursive2}
Consider an arbitrary call to 
{\textsc{RecursiveUniformFL}} with $c\ge 6$ with parameters $(\cX,F',d)$. Let $F$ and $R$ denote the returned sets of points.
For every $x\in \cX$ we have
\begin{equation*}
\Pr(x\in R)\le 1/10.
\end{equation*}
\end{lemma}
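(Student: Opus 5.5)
The plan is to show that, with good probability, some facility opened in line~3 lies within distance $6r_x$ of $x$; if so, $x$ is assigned in line~7 and does not land in $R$. Here all radii are taken with respect to the current point set $\cX$ of this call.

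\textbf{Step 1: a dense cluster near $x$.} Apply \cref{app:lemma:nearbypoints} to $x$ inside the current metric space $(\cX,d)$. This gives a set $T\subseteq\cX$ of size $k$ such that every $y\in T$ satisfies $d(x,y)\le 2r_x$ and $r_y\ge 1/(2k)$. Each $y\in T$ is then opened independently with probability $\min\{1,c\,d(y,F'),c\,r_y\}$.

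\textbf{Step 2: case split on $F'$.} The extra term $c\,d(y,F')$ in the probability is the new feature compared with the analysis of \cref{app:thm:simple_logn}, so I split according to whether the old facilities are close.

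\emph{Case (a): $d(x,F')\le 6r_x$ (or a similar constant times $r_x$).} Here the test in line~7 concerns only the newly opened set. I would first try to show that $x$ itself, or a nearby point, is opened with constant probability; this looks weak. A cleaner route is to read $F$ in line~7 as including the facilities passed in as $F'$. Under that reading $x$ is assigned deterministically in this case, and $\Pr(x\in R)=0$. I will adopt this reading, since it is what \cref{app:lemma:Recursive} implicitly uses.

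\emph{Case (b): $d(x,F')>6r_x$.} Then every $y\in T$ satisfies $d(y,F')\ge d(x,F')-2r_x>4r_x\ge 2r_y$, provided we also know $r_y\le 2r_x$ for $y\in T$. I would obtain that bound from the same comparison argument used in the proof of \cref{app:lemma:Recursive}. Consequently $c\,d(y,F')\ge c\,r_y$, the minimum is attained by $c\,r_y$, and each $y\in T$ opens with probability at least $\min\{1,c/(2k)\}$.

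\textbf{Step 3: the probability bound.} If $c/(2k)\ge 1$, some $y\in T$ is opened surely. Otherwise
\[
\Pr(\text{no } y\in T \text{ opened})\le \Bigl(1-\tfrac{c}{2k}\Bigr)^k\le e^{-c/2}\le e^{-3}<1/10
\]
for $c\ge 6$. Whenever some $y\in T$ opens, it lies within distance $2r_x\le 6r_x$ of $x$, so $x$ is assigned and $x\notin R$.

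\textbf{Main obstacle.} The delicate point is Step~2. It requires justifying $r_y=\Theta(r_x)$ on $T$, so that the $d(\cdot,F')$ cap does not suppress the opening probabilities. This is harder than it looks, because \cref{app:lemma:nearbypoints} only places $T$ within distance $2r_x$ of $x$ and gives a \emph{lower} bound on $r_y$, not an upper bound. If the two-sided bound fails, my fallback is to bound $r_y\le 3r_x$ via the triangle inequality on $B(x,r_x)\subseteq B(y,3r_x)$. I would then adjust the threshold in case (b), for instance to $d(x,F')>8r_x$, which is still compatible with the constant $13$ in \cref{app:lemma:Recursive} once that lemma's constants are rechecked.
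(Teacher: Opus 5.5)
Your proposal is correct and follows the paper's proof essentially step for step. The case $d(x,F')\le 6r_x$ is handled deterministically, and the paper implicitly uses the same reading of line~7 that you adopt. In the case $d(x,F')>6r_x$, the set $T$ from \cref{app:lemma:nearbypoints}, the bound $d(y,F')>4r_x$ and the estimate $(1-c/(2k))^k\le e^{-c/2}\le e^{-3}<1/10$ are exactly the paper's.

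The obstacle you flag is not real. To replace $\min\{1,c\,d(y,F'),c\,r_y\}$ by $\min\{1,c\,r_y\}$ you only need $d(y,F')\ge r_y$, not $d(y,F')\ge 2r_y$. Your own fallback $r_y\le 3r_x$ therefore already closes the argument with the threshold $6r_x$ unchanged; the paper uses the weaker $r_y\le 4r_x$. You should not move the threshold to $8r_x$, since case (a) relies on matching the $6r_x$ test in line~7.
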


\begin{proof}
Let $x\in \cX$ be arbitrary. If $d(x,F') \le 6 r_x$ then $x$ will be assigned to an open facility with probability $1$. Thus, we can assume that $d(x,F')> 6 r_x$. By \cref{app:lemma:nearbypoints} there is a set $T$ of $k$ points within distance at most $2 r_x$ from $x$ and $k \ge 1/(2r_y)$ for all $y\in T$. We observe that for all $y\in T$ we have $d(y,F)>4 r_x$ by the triangle inequality.
By a similar argument (the ball around $y$ cannot contain the ball around $x$) as in the previous proof, each $y\in T$ has $r_y\le 4r_x$. Hence, $r_y \le d(y,F)$
and so we select each $y\in T$ with probability at least $\min\{1, c \cdot d(y,F),c \cdot r_y\} = \min\{1,c r_y\}$.

The probability that none of these points has been opened is at most 
$
(1-\min(1,\frac{c}{2k}))^k
$
If $k < \frac{c}{2}$ this probability is $0$, so we assume in the following that $k \ge \frac{c}{2}$. In this case, we get 
$
(1-\min(1,\frac{c}{2k}))^k
=
(1-\frac{c }{2k})^k
\le e^{- k \frac{c }{2k}}
= e^{- \frac{c}{2}} = \frac{1}{e^3}\le 1/10,
$
for $c\ge 6$.
\end{proof}

Now, the following results show that {\textsc {UniformFLStart}} indeed yields a constant-factor approximation for UniFL.

\begin{lemma}
Consider any call to {\textsc {UniformFLStart}} with parameters $(\cX,F,d)$.
Then 
\begin{equation*}
\mathbb{E}\mleft[|F|\mright] \in \cO(\textsc{Opt}),
\end{equation*}
where $\textsc{Opt}$ denotes the cost of an optimal solution.
\end{lemma}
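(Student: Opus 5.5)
We bound the expected number of facilities opened over all rounds of \textsc{UniformFLRecursionStart} by charging each opening to the radii of the points still active in that round, and then summing the resulting geometric series. In the round $j$ call \textsc{RecursiveUniformFL}$(\cX_j,F,d)$, each $x\in\cX_j$ opens independently with probability at most $\min\{1,c\, r^{(j)}_x\}\le c\, r^{(j)}_x$. Here $r^{(j)}_x$ is the radius with respect to $\cX_j$, and we use $\min\{\dots, c\,d(x,F), c\, r_x\}\le c\, r_x$. So the expected number of facilities opened in round $j$, conditioned on $\cX_j$, is at most $c\sum_{x\in\cX_j} r^{(j)}_x$.

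The first task is to compare $r^{(j)}_x$ with the radius $r_x$ in the original space $\cX$. Since $\cX_j\subseteq\cX$, the function $\varphi_x$ for $\cX_j$ is pointwise at most that for $\cX$. Hence $r^{(j)}_x\ge r_x$, which is the wrong direction. I will instead use \cref{app:lemma:Recursive}: for every $x\in\cX_j$, either $B_\cX(x,r_x)\subseteq\cX_j$, in which case $r^{(j)}_x=r_x$, or $x$ already has a facility within $13 r_x$. In the latter case I cap the opening probability by $c\,d(x,F)\le 13c\, r_x$. In both cases $\Pr[x \text{ opens in round } j]\le 13c\, r_x$, measured against the fixed original radii. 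The point of this step is that the probability term $c\,d(x,F)$ in line 3 of the algorithm is exactly what makes the charge independent of how the active set shrinks.

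Next I show that the active sets shrink geometrically in expectation. By \cref{app:lemma:Recursive2}, each $x\in\cX_j$ lies in $R=\cX_{j+1}$ with probability at most $1/10$, conditionally on the history. By induction, $\Pr[x\in\cX_j]\le 10^{-j}$ for $j\ge 0$. Combining this with the previous step via linearity of expectation and the tower rule gives
\[
\mathbb{E}\bigl[|F|\bigr]\le\sum_{j\ge0}\sum_{x\in\cX}\Pr[x\in\cX_j]\cdot 13c\,r_x\le 13c\cdot\frac{10}{9}\sum_{x\in\cX} r_x .
\]
By \cref{thm:radii} the right-hand side is $\cO(\textsc{Opt})$. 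The loop terminates almost surely because $\Pr[x\in\cX_j]\to0$ and $\cX$ is finite.

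The main obstacle is justifying the conditional bound used in the second step. \cref{app:lemma:Recursive2} is stated for a single call with given $(\cX,F',d)$, so it must be applied conditionally on the outcomes of earlier rounds. The bound $\Pr[x\text{ opens}]\le 13c\,r_x$ must then be combined with the event $x\in\cX_j$, which is determined by earlier rounds, while the round-$j$ coins are fresh. I would handle this by conditioning on the filtration generated by rounds $1,\dots,j-1$.

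A secondary subtlety is that \cref{app:lemma:Recursive} gives $d(x,F)\le 13r_x$ only after some point of $B_\cX(x,r_x)$ has been assigned. If that is not the case, the ball is intact and $r^{(j)}_x=r_x$, so the dichotomy covers every case.
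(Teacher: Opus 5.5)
Your proof is correct and follows the same route as the paper's. The paper bounds the per-round opening probability of $x$ by $13c\,r_x$ via \cref{app:lemma:Recursive}, bounds survival to later rounds geometrically via \cref{app:lemma:Recursive2}, then sums the series and applies \cref{thm:radii}; your version is slightly more careful, since the paper bounds the opening probability only through $c\,d(x,F)$ (skipping the intact-ball case you handle via $r^{(j)}_x=r_x$) and uses the looser factor $(1-\frac{1}{10})^i$ without spelling out the conditioning.
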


\begin{proof}
By \cref{app:lemma:Recursive2} we get that the probability that a facility is opened at $x$ in
the $i$-th recursive call is at most $(1-\frac{1}{10})^i \cdot c \cdot d(x,F)$. By \cref{app:lemma:Recursive} this probability is 
at most $(1-\frac{1}{10})^i \cdot 13 \cdot c\cdot r_x$.
Let $Y_x^{(i)}$ be the indicator random variable that we open a facility at $x$ in the $i$-th recursion, and let $Y_x = \sum_{i=1}^\infty Y_x^{(i)}$ be the indicator random variable that we open a facility at $x$.
Hence, $\mathbb{E}[Y_x]\le \sum_{i=1}^\infty \mleft(1-\frac{1}{10}\mright)^i \cdot c \cdot d(x,F)$.
We get
\begin{equation*}
\mathbb{E}\big[|F|\big] = \sum_{x\in \cX}
\mathbb{E}[Y_x] \in \cO\mleft(\sum_{x\in \cX} r_x\mright) = \cO(\textsc{Opt}).
\end{equation*}
\end{proof}

\subsection{Size generalization guarantees}

Here, we prove the size generalization guarantees for the MPNN architecture.

\begin{proposition}[\cref{thm:learn} in the main paper]\label{app:thm:learn}
For every $n \in \Nb$, there exists an MPNN architecture $\textsf{mpnn}^{(n)}_{\vec{\theta}} \colon \cV(\cG^{\mathrm{C}}_n) \to [0,1]$, with parameters $\vec{\theta}$, such that for $\varepsilon \in (0,1)$, there is $\varepsilon' > 0$, and a \emph{finite} training datasets $T_{n,\varepsilon} \subset \cV(\cG^{\mathrm{C}}_n) \times \Rb$ consisting of pairs $((G,v),p_v)$, where $G \in \cG^{\mathrm{C}}_n$ is a finite, edge-weighted graph encoding an UFL instance, $v \in V(G)$, and $p_v$ is the corresponding opening probability according to~\cref{thm:simulation}, and a differentiable regularization term $r_n$ such that, such that
\begin{equation*}
\frac{1}{|T_{n,\varepsilon}|} \sum_{((G,v), p_v) \in T_n} | \textsf{mpnn}^{(n)}_{\vec{\theta}}(G,v) - p_v | + r_n(\vec{\theta}) \leq \varepsilon'  
\end{equation*}
implies  
\begin{equation*}
\norm{\vec{p}_{G_S}- h_{\vec{\theta}}(G_S)} \leq \varepsilon,
\end{equation*}
for all $G_S \in \cG^{\mathrm{C}}_n$ encoding UniFL instances.
\end{proposition}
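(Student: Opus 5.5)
The plan is to exploit the fact that, for fixed $n$, the target map $(G_S,v)\mapsto p_v$ from \cref{thm:simulation} is a composition of finitely many piecewise-linear (ReLU) operations applied to the multiset of at most $n-1$ incident edge weights. On $\cG^{\mathrm{C}}_n$ these weights lie in a compact set $K\subset\Rb$.

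\textbf{Step 1: fix the architecture.} I will take $\textsf{mpnn}^{(n)}_{\vec\theta}$ to be exactly the architecture of \cref{simple_mpnn}, with the discretization $(a_0,\dots,a_k)$ and the widths fixed as in \cref{thm:simulation}. The parameter vector $\vec\theta$ then ranges over a finite-dimensional space, and $\vec\theta^*$ realizes the targets $p_v$.

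\textbf{Step 2: choose the regularizer.} I will set $r_n(\vec\theta)$ to be a smooth coercive penalty, e.g.\ $r_n(\vec\theta)=\lambda\,\relu(\norm{\vec\theta}^2-B^2)$ with $B\ge\norm{\vec\theta^*}$, possibly squared to make it differentiable. Small loss then forces $\vec\theta$ into the compact ball $\Theta_B$ up to a slack that vanishes as $\varepsilon'\to0$.

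The key observation is that the map $(\vec\theta,G,v)\mapsto\textsf{mpnn}^{(n)}_{\vec\theta}(G,v)$ is jointly continuous. It is in fact locally Lipschitz, being a composition of ReLU networks and sums over at most $n$ neighbors. On the compact set $\Theta_B\times\cV(\cG^{\mathrm{C}}_n)$ it is therefore uniformly continuous, with some Lipschitz constant $L$ in the edge weights uniformly over $\vec\theta\in\Theta_B$. The target $p_v$ is likewise Lipschitz in the instance.

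\textbf{Step 3: build the training set.} I will take $T_{n,\varepsilon}$ to be a finite $\delta$-net of $\cV(\cG^{\mathrm{C}}_n)$. This is possible because, for fixed $n$, the space of weighted graphs on $n$ vertices with weights in $K$ is compact: finitely many graph shapes, each with a compact box of weights. Here $\delta=\varepsilon/(4L)$.

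\textbf{Step 4: reduce to the net.} If the empirical error is at most $\varepsilon'$, then every net point has error at most $|T_{n,\varepsilon}|\,\varepsilon'$. Choosing $\varepsilon'=\varepsilon/(2|T_{n,\varepsilon}|)$, the pointwise error at every net point is at most $\varepsilon/2$. For an arbitrary $(G_S,v)$, I pick a nearby net point and use the triangle inequality together with the two Lipschitz bounds. This gives $|\textsf{mpnn}(G_S,v)-p_v|\le\varepsilon$ uniformly. Since $n$ is fixed, the vector norm only adds a factor depending on $n$, which I absorb by rescaling $\varepsilon$.

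\textbf{Main obstacle.} The delicate step is the uniform Lipschitz control in Step 2. ReLU networks with unbounded weights can be arbitrarily steep, so the regularizer must genuinely confine $\vec\theta$; otherwise the net argument fails. A secondary subtlety is that the target $p_v$ must itself be continuous in the edge weights, which it is, since $\hat r_x$ is a ReLU expression.

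The edge set of $G_S$ also changes discontinuously when a distance crosses $1$. I will handle this by noting that all terms $\relu(a_i-d)$ vanish at $d=1$, so edges at distance near $1$ contribute continuously.
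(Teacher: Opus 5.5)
Your argument is correct and has the same skeleton as the paper's sketch. The training set is a finite cover of $\cV(\cG^{\mathrm{C}}_n)$, the regularizer caps the Lipschitz constant of $\textsf{mpnn}^{(n)}_{\vec{\theta}}$, and the triangle inequality through the nearest cover point gives a bound of the form (Lipschitz constant)$\cdot$(cover radius)$+$(cover size)$\cdot\varepsilon'$. The paper's $(M_{\vec{\theta}}+M^*)r+K_r\varepsilon'$ is exactly your $2L\delta+|T_{n,\varepsilon}|\,\varepsilon'$.

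The difference lies in how the two ingredients are supplied. The paper switches to order-normalized sum aggregation. It takes from prior work (Rau et al.) a pseudo-metric on $\cV(\cG_n)$ with finite covering numbers under which every such MPNN is Lipschitz. It then penalizes the Lipschitz constant directly through $\eta\,\relu(M_{\vec{\theta}}-M^*)$. You keep plain sum aggregation and obtain total boundedness by hand: finitely many rooted shapes on $n$ vertices, each with weights in a compact box. You control the Lipschitz constant indirectly via a norm penalty, using that the Lipschitz constant of the ReLU pipeline in the edge weights is bounded in terms of $\norm{\vec{\theta}}$.

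Your route is self-contained, and your regularizer is explicit and, once squared, genuinely differentiable. The paper's regularizer presupposes that $M_{\vec{\theta}}$ is available as a differentiable function of the parameters, without saying how. The price of your route is a cruder constant: the worst case over the whole ball instead of $M^*+\varepsilon'/\eta$, hence a finer and larger training set. Your covering argument also relies on enumerating shapes for fixed $n$, rather than on a graph metric that does not depend on that.

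Two small fixes:
\begin{itemize}
\item Fix $L$ over a fixed enlarged ball before choosing $\delta$, $T_{n,\varepsilon}$ and $\varepsilon'$, so the constants are not circular. For example, take $\norm{\vec{\theta}}^2\le B^2+1/\lambda$ for the unsquared penalty, restricting to $\varepsilon'\le 1$.
\item Drop your closing remark about edges at distance near $1$. It is unnecessary because your net is built shape by shape. Its justification would also fail for a learned $\vec{\theta}$, since $\FNN_{1,3}(a_i,1)$ need not vanish.
\end{itemize}
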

\begin{proof}[Proof sketch]
We essentially use the MPNN architecture from~\cref{simple_mpnn}; however, we use order-normalized sum aggregation instead of sum aggregation. That is, given a suitable discretization, we compute
\begin{equation*}
t_x^{(i)} \coloneqq \min \mleft\{ \frac{1}{n}, \frac{1}{n} \sum_{y \in N(x)} \relu(a_i - d(x,y)) \mright\} = \frac{1}{n}- \relu \mleft( \frac{1}{n}-\sum_{y \in N(x)} \relu(a_i - d(x,y)) \mright),
\end{equation*}
for $i \in [k]$. Following~\cref{simple_mpnn}, we can further parametrize with a two-layer FNN using ReLU activation functions as follows,
\begin{equation*}
t_x^{(i)} \equiv \FNN_{2,3} \mleft( \frac{1}{n} \sum_{y \in N(x)} \FNN_{1,3} \mleft(a_i, d(x,y) \mright) \mright).
\end{equation*}

The remaining parts of the architecture stay the same. Let $\textsf{mpnn}^{(n)}_{\vec{\theta}} \colon \cV(\cG_n) \to \Rb$ be the resulting MPNN architecture and $\cF_{\vec{\Theta}} \coloneqq \{ \textsf{mpnn}^{(n)}_{\vec{\theta}} \mid \vec{\theta} \in \vec{\Theta} \}$, for some appropriately choosen parameter set $\vec{\Theta}$. 

The rest of the proof follows the technique developed in~\citet{Wit+2025}. That is, implied by~\cite{Rau+2025}, assuming the employed FNNs are Lipschitz continuous, there exists a pseudo-metric $d$ on $\cV(\cG_n)$ such that for each $\textsf{mpnn}^{(n)}_{\vec{\theta}} \in \cF_{\vec{\Theta}}$, there is $B_{\theta}$ such that $\textsf{mpnn}^{(n)}_{\vec{\theta}}$ is Lipschitz continuous with Lipschitz constant $M_{\vec{\theta}}$ regarding the pseudo-metric $d$ and the metric space $(\Rb, |\cdot)$. In addition, for all $\varepsilon>0$, there exists a \emph{finite} covering number $K_{\varepsilon} \coloneqq \cN(\cX,d,\varepsilon)$.  

Observe that, analogous to~\cref{thm:simulation}, we can find parameters $\vec{\theta}^*$ such that the expected costs are an $\cO(\log(n))$-factor approximation. Let $M^*$ be the Lipschitz constant for $\textsf{mpnn}^{(n)}_{\vec{\theta}^*}$. Now, set the training set equal to an $r$-cover of the space $(\cX,d)$ and set 
\begin{equation*}
r_n(\vec{\theta}) \coloneqq \eta \textsf{reLU}(M_{\vec{\theta}} - M^*), 
\end{equation*}
for $\eta > 0$. Assume there there is a $\vec{\theta} \in \vec{\Theta}$, such for $\textsf{mpnn}^{(n)}_{\vec{\theta}}$, the above regularized loss is smaller than $\varepsilon'$. Hence, by exploiting the Lipschitzness and the triangle inequality, we get
\begin{equation}\label{error_bound}
\norm{\textsf{mpnn}^{(n)}_{\vec{\theta}} - \textsf{mpnn}^{(n)}_{\vec{\theta}^*}}_{\infty} \leq (M_{\vec{\theta}} + M^*)r + K_{r}\varepsilon',
\end{equation}
and that
\begin{equation}\label{lipschitz_bound}
M_{\vec{\theta}} \leq \frac{1}{\eta}\varepsilon'+M^*.
\end{equation}
Now, choose $r = \frac{\varepsilon}{6(1+B_{f^*})}$ and $ \varepsilon' < \min\mleft\{ \frac{\varepsilon}{3K_r}, \frac{\varepsilon \eta}{6(1+B_{f^*})} \mright\}$. By~\cref{lipschitz_bound}, we get
\begin{equation*}
M_{\vec{\theta}} \leq M^* + \frac{\varepsilon'}{\eta} = M^* + \frac{\varepsilon}{6(1+M^*)}.
\end{equation*}
Hence, by using~\cref{error_bound}, and the assumption $\varepsilon < 1$, and straightforward calculations, we get 
\begin{equation*}
\norm{\textsf{mpnn}^{(n)}_{\vec{\theta}} - \textsf{mpnn}^{(n)}_{\vec{\theta}^*}}_{\infty} \leq \varepsilon.
\end{equation*}
\end{proof}

\section{Details on experiments and additional results}\label{app:exp}

Here, we give additional details on the experiments and discuss additional results.

\begin{table*}[htb!]
\caption{Statistics of datasets.}
\label{tab:ds_stats}
\centering
\resizebox{0.4\textwidth}{!}{
\begin{tabular}{lccc}
\toprule
\textbf{Name} & \#\textbf{node} & \#\textbf{degree} & \#\textbf{dimension} \\
\midrule
Geo-1000-2 & 1000 & 6.8 & 2 \\
Geo-2000-2 & 2000 & 7.3 & 2 \\
Geo-3000-2 & 3000 & 6.7 & 2 \\
Geo-5000-2 & 5000 & 6.8 & 2 \\
Geo-10000-2 & 10000 & 7.0 & 2 \\
Geo-1000-5 & 1000 & 7.2 & 5 \\
Geo-1000-10 & 1000 & 10.8 & 10 \\
Geo-2000-10 & 2000 & 10.4 & 10 \\
Geo-3000-10 & 3000 & 10.2 & 10 \\
Geo-5000-10 & 5000 & 9.6 & 10 \\
Geo-10000-10 & 10000 & 10.4 & 10 \\
Geo-1000-10-dense & 1000 & 34.9 & 10 \\
Geo-1000-10-sparse & 1000 & 2.1 & 10 \\
\midrule
Paris & 114127 & 4.2 & -- \\
Shanghai & 183917 & 3.8 & -- \\
London & 568795 & 3.7 & -- \\
LA & 240587 & 3.8 & -- \\
\bottomrule
\end{tabular}
}
\end{table*}

\textbf{Baselines} We use the optimal solution obtained via an ILP solver as our gold-standard baseline. Specifically, given a UniFL instance $S$, we define binary decision variables $y_i$ for $i \in [n]$ to indicate whether a facility is opened at vertex $i$. Additionally, we define binary assignment variables $e_{ij}$ for $(i,j) \in E(G_S)$ to indicate whether the client at vertex $i$ is assigned to the facility at vertex $j$. The parameter $w_{ij}$ represents the edge distance; notably, the edge set $E(G_S)$ includes self-loops with $w_{ii}=0$ to allow for self-service. The UniFL problem is formulated as follows,
\begin{equation*}
\begin{aligned}
\arg\min_{\vec{y}, \vec{e}} \quad & \sum_{i \in [n]}y_i + \sum_{(i,j) \in E} w_{ij} e_{ij} \\
\text{s.t.} \quad & e_{ij} \leq y_j \\
& \sum_{j \colon (i,j) \in E(G_S)} e_{ij} = 1 \\
& y_i, e_{ij} \in \{0,1\}
\end{aligned}
\end{equation*}
The first constraint ensures that vertex $i$ can be assigned to vertex $j$ only if a facility is open at $j$. The second constraint guarantees that every location is assigned to exactly one facility, potentially itself.

\textbf{Experimental protocol}
We employ Google's OR-Tools \citep{ortools} as the exact ILP solver, imposing a 24-hour time limit per instance. For the $\cO(1)$-UFL \citep{GehweilerLammersenSohler2014} baseline, we repeat the stochastic process with $\num{1000}$ seeds and average the result. For the baselines \textsc{SimpleUniformFL} and \textsc{UniformFLRecursionStart}, we precompute vertex radii and estimate the expected cost by averaging results over $\num{1000}$ sampling seeds per instance. To ensure termination in \textsc{UniformFLRecursionStart}, we limit the while-loop to 100 iterations, and if any vertex has no facilities in the neighborhood (including itself) after the loop limit, we force a facility to open at its location. Both baseline methods rely on a single scalar hyperparameter to scale the facility-opening probability. We sweep this parameter using a grid search over 100 candidate values, logarithmically spaced over the interval $[0.001, 10]$. We report the final performance as the average over the entire test dataset. 

For our MPNN approach, we leverage an MPNN with 32 hidden dimensions for all datasets, six layers for geometric graphs, and ten layers for city map datasets. We train it with the unsupervised loss in \cref{eq:unsupervise_loss} for at most $\num{1000}$ epochs and 100 patience for early stopping. The entire training process is repeated five times using independent random seeds. The trained MPNN then guides the sampling algorithm in \textsc{SimpleUniformFL}. Specifically, we drop the $c$ constant, predict the facility-opening probability directly using the MPNN, and repeat the sampling $\num{1000}$ times with different seeds. We aggregate these results by first averaging over the $\num{1000}$ samples per graph, then averaging over the dataset. Finally, we report the mean and standard deviation of these dataset-level scores across five training seeds. To enable fair comparison across different problem scales, we also report the mean optimality ratio, defined as the average ratio of a method's cost to the exact solution provided by the solver. For detailed hyperparameters, please see to \cref{tab:hyperparams}. 

\begin{table*}[htb!]
\caption{Hyperparameters of baselines and MPNN approach.}
\label{tab:hyperparams}
\centering
\resizebox{0.5\textwidth}{!}{
\begin{tabular}{lcccc}
\toprule
\textbf{Name} & SimpleUFL & RecurUFL & MPNN layer & MPNN dim. \\
\midrule
Geo-1000-2 & 0.560 & 0.037 & 6 & 32 \\
Geo-1000-5 & 0.423 & 0.166 & 6 & 32 \\
Geo-1000-10 & 0.351 & 0.242 & 6 & 32 \\
Geo-1000-10-dense & 0.319 & 0.291 & 6 & 32 \\
Geo-1000-10-sparse & 0.423 & 0.031 & 6 & 32 \\
\midrule
Paris & 1.417 & 0.095 & 10 & 32 \\
Shanghai & 1.417 & 0.095 & 10 & 32 \\
London & 1.292 & 0.086 & 10 & 32 \\
LA & 1.292 & 0.086 & 10 & 32 \\
\bottomrule
\end{tabular}
}
\end{table*}

On the geometric graph datasets, we further investigate a variant of our unsupervised approach tailored to the $k$-Means objective. By squaring the distance term $\dist(x,f)$ in the connection-cost component of the loss function \cref{eq:unsupervise_loss}, we effectively shift the optimization goal to minimizing the sum of squared errors, treating open facilities as cluster centroids. For this comparative study, we train the MPNN using this modified squared-distance loss while keeping all other training protocols identical. During inference, we use the MPNN to guide \textsc{SimpleUniformFL} and evaluate the resulting connection cost as the sum of squared distances. To benchmark this approach, we employ standard clustering algorithms $k$-Means and $k$-Medoids \citep{sklearn} as baselines. \footnote{The $k$-Medoids metric is the Euclidean distance; in our case, we adjusted it to the squared Euclidean distance.} For each trained MPNN, we set the number of clusters $k$ for these baselines equal to the expected number of facilities predicted by the MPNN. We utilize the $k$-Means++ and $k$-Medoids++ initialization, respectively, running $\num{1000}$ independent initializations per instance and reporting the average result. Furthermore, we use the GNN-predicted facility locations as initializations for $k$-Means and $k$-Medoids. Finally, we report the mean and standard deviation of the performance across the five runs for both the GNN and the clustering baseline; see \cref{alg:gnn_clustering} for a reference algorithm. 

For a fair comparison of computational efficiency, we report the wall-clock time required to generate a one-shot solution rather than to repeat the sampling. For \textsc{SimpleUniformFL} and $\cO(1)$-UFL \citep{GehweilerLammersenSohler2014}, we measure the time for radii pre-computation plus one sampling process. Similarly, for our GNN-guided approach, we record the neural network inference time and the time required for a single sampling step. Finally, \textsc{UniformFLRecursionStart} is timed on a single execution, including the necessary while-loops up to a limit of 100. 

All stochastic baselines and GNN training and inference are executed on a single NVIDIA L40S GPU. The exact ILP solver, as well as $k$-Means and $k$-Medoids, runs on an Intel\textsuperscript{\textregistered} Xeon\textsuperscript{\textregistered} Silver 4510 CPU, as it lacks GPU support. We acknowledge that using different hardware platforms makes a direct speed comparison inexact; however, we report raw runtimes for completeness. We note that the ability to leverage GPU acceleration for massive parallelism is a distinct advantage of our MPNN approach.

\begin{algorithm}[h]
\caption{Algorithm for clustering comparison. We compare our MPNN prediction against $k$-Means and $k$-Medoids with $k$-Means++ and $k$-Medoids++ initializations, and also with MPNN-predicted facilities as initialization.}
\label{alg:gnn_clustering}
\begin{algorithmic}[1]
\Require Graph $G=(V, E)$, vertex location matrix $\vec{X}$, trained MPNN model $\Phi$, repeat count $R$
\Ensure Sum squared costs for MPNN, $k$-Means++ and $k$-Medoids++, and MPNN-initialized $k$-Means and $k$-Medoids

\State Initialize lists $L_{\text{MPNN}}, L_{\text{mean}}, L_{\text{med}}, L_{\text{mean}}^{\text{init}}, L_{\text{med}}^{\text{init}} \gets \emptyset$
\State $\mathbf{p} \gets \Phi(G)$ \Comment{Predicted vertex opening probability}

\For{$r \gets 1$ \textbf{to} $R$}
    \State \textbf{Step 1: MPNN Sampling}
    \State Sample $\mathbf{y} \sim \text{Bernoulli}(\min(\mathbf{p}, 1))$
    \If{$\exists v$ s.t. $\{N(v) \cup v \} \cap \{u \mid y_u=1\} = \emptyset$}
        \State $y_v \gets 1$ \Comment{If $v$ has no facility in neighborhood, force it open}
    \EndIf
    \State $k \gets \sum y_v$
    
    \State $\vec{X}_{\text{init}} \gets \{ \mathbf{x}_v \mid y_v = 1 \}$ \Comment{MPNN-predicted centroids}
    \State Append MPNN squared distance cost to $L_{\text{MPNN}}$
    
    \State \textbf{Step 2: $k$-Means++ / $k$-Medoids++}
    \State Run $k$-Means with $k$-Means++ initialization; append inertia to $L_{\text{mean}}$
    \State Run $k$-Medoids with $k$-Medoids++ initialization; append inertia to $L_{\text{med}}$
    
    \State \textbf{Step 3: MPNN-guided $k$-Means / $k$-Medoids}
    \State Run $k$-Means with $\vec{X}_{\text{init}}$ initialization; append inertia to $L_{\text{mean}}^{\text{init}}$
    \State Run $k$-Medoids with $\vec{X}_{\text{init}}$ initialization; append inertia to $L_{\text{med}}^{\text{init}}$
\EndFor

\State \Return Means of all lists
\end{algorithmic}
\end{algorithm}

\begin{table*}[htb!]
\caption{Results on size generalization, shown in mean $\pm$ standard deviations over five runs. MPNN architectures are trained on Geo-1000-2 and tested on larger instances of 2D Euclidean space. }
\label{tab:sizegen}
\centering
\resizebox{0.9\textwidth}{!}{
\begin{tabular}{cc|ccccc}
\toprule
\textbf{Candidate} & \textbf{Cost} & Geo-1000-2  & Geo-2000-2 & Geo-3000-2 & Geo-5000-2 & Geo-10000-2 \\
\midrule
\multirow{4}{*}{Solver} & Open & 366.302 & 745.330 & 1201.360 & 1989.970 & 3980.100 \\
& Con. & 279.827 & 548.501 & 813.603 & 1346.195 & 2677.519 \\
& Total & 646.129 & 1293.831 & 2014.963 & 3336.165 & 6657.619 \\
& Time & 0.263 & 0.831 & 1.329 & 3.630 & 13.971 \\
\midrule
\multirow{5}{*}{\textbf{MPNN}} & Open & 380.816$\pm$\scriptsize0.761 & 774.047$\pm$\scriptsize1.065 & 1265.818$\pm$\scriptsize5.796 & 2089.730$\pm$\scriptsize6.018 & 4180.888$\pm$\scriptsize15.306\\
& Con. & 271.317$\pm$\scriptsize0.735 & 532.306$\pm$\scriptsize1.060 & 772.892$\pm$\scriptsize4.517 & 1284.182$\pm$\scriptsize4.979 & 2553.858$\pm$\scriptsize12.836\\
& Total & 652.133$\pm$\scriptsize0.025 & 1306.354$\pm$\scriptsize0.012 & 2038.710$\pm$\scriptsize1.282 & 3373.912$\pm$\scriptsize1.065 & 6734.747$\pm$\scriptsize2.488 \\
& Time & 0.004$\pm$\scriptsize0.000 & 0.005$\pm$\scriptsize0.000 & 0.006$\pm$\scriptsize0.000 & 0.006$\pm$\scriptsize0.000 & 0.008$\pm$\scriptsize0.000 \\
& Ratio & 1.009$\pm$\scriptsize0.000 & 1.009$\pm$\scriptsize0.000  & 1.011$\pm$\scriptsize0.000 & 1.011$\pm$\scriptsize0.000 & 1.012$\pm$\scriptsize0.000 \\
\bottomrule
\end{tabular}
}
\end{table*}

\end{document}